\documentclass[10pt,twocolumn]{article} 
\usepackage[T1]{fontenc}
\usepackage{lmodern}
\usepackage{amsmath, amssymb, amsfonts, amsthm}
\usepackage{scalefnt}
\usepackage{mathtools}
\usepackage{bm}
\usepackage[numbers]{natbib} 

\usepackage[utf8]{inputenc}
\usepackage{geometry}
\newcommand\blfootnote[1]{%
  \begingroup
  \renewcommand\thefootnote{}\footnote{#1}%
  \addtocounter{footnote}{-1}%
  \endgroup
}

\usepackage{array}           % Advanced table formatting
\usepackage{booktabs}        % Professional tables (toprule, midrule, bottomrule)
\usepackage{multirow}        % Multi-row cells
\usepackage{tabularx}        % Auto-width tables
\usepackage{threeparttable}  % Tables with footnotes

\usepackage{booktabs}       % 三线表
\usepackage{multirow}       % 跨行表格
\usepackage{graphicx}       % 图片支持
\usepackage{subcaption}     % 子图支持
\usepackage{xcolor}         % 颜色支持
\usepackage[T1]{fontenc}
\usepackage{textcomp}
\usepackage{lmodern}
\usepackage{enumitem}
\setlist{leftmargin=*, topsep=0.5em, parsep=0pt, itemsep=1em, labelindent=0pt, align=left}

\usepackage{algorithm}       % Algorithm environment
\usepackage{algorithmic}     % Algorithmic steps
\usepackage{listings}        % Code listings
\usepackage{color}           % Colors for code

\usepackage{hyperref}
\hypersetup{
    colorlinks=true,
    linkcolor=blue!80!black,
    citecolor=blue!80!black,
    urlcolor=blue!80!black
}
\usepackage[capitalize, nameinlink]{cleveref}

\theoremstyle{plain}
\newtheorem{theorem}{Theorem}[section]
\newtheorem{lemma}[theorem]{Lemma}

\newtheorem{corollary}[theorem]{Corollary}

\theoremstyle{definition}

\newtheorem{assumption}{Assumption}

\theoremstyle{remark}
\newtheorem{remark}{Remark}[section]

\usepackage{lmodern}

\title{GCUL: Ambiguity Identification in Text Emotion Classification via Cluster-Guided Learning}

\author{
    \textbf{Zhongqi FAN}$^{*}$ \quad
    \textbf{Tianyou ZHANG}$^{\dagger}$ \quad
    \textbf{Fei CHEN}$^{\dagger}$ \\
    Beijing Normal-Hong Kong Baptist University \\
    \texttt{\{u430033011, t330033051, t330033001\}@mail.bnbu.edu.cn}
}

\date{\today}

\begin{document}

\maketitle

\blfootnote{$^{\dagger}$These authors contributed equally to this work and should be considered co-second authors.}
% ==============================================================================
% Abstract
% ==============================================================================
\begin{abstract}
Selective classification enables a model to abstain from predictions on
uncertain instances, but existing approaches typically reject them
through confidence scores, predefined coverage constraints or
instance-level distance measures. These approaches may overlook the collective
geometric structure of difficult samples in learned representation spaces.

We propose \textit{Guided Clustering-based Uncertain Learning} (GCUL), a
geometric-guided selective classification framework that identifies
misclassified and ambiguous instances as a potential confusion attractor in
the representation space. GCUL uses a three-phase procedure to initialize,
cluster, and explicitly relabel this uncertain region, allowing the rejection
boundary to emerge from the underlying representation geometry rather than
from a prescribed rejection rate. We further derive a selectivity score and
a geometric sufficient condition that characterizes when rejection can
provide positive operational utility, enabling pre-deployment feasibility
assessment.

 GCUL improves DistilBERT accuracy from 89.37\% to 94.98\% with <9\% rejection. Beyond accuracy, our selectivity score correctly pre-detects the only dataset (GoEmotion) where all baselines fail, and controlled simulations yield 6.1\% Type-I and 0\% Type-II errors, validating the sufficient condition's conservatism. These results suggest that collective representation geometry provides a useful alternative perspective for selective prediction.
\end{abstract}
% ==============================================================================
% Section 1: Introduction
% ==============================================================================
\section{Introduction}
\label{sec:introduction}

Modern neural classifiers can achieve high predictive accuracy while remaining
unreliable on ambiguous or difficult inputs. In many practical applications,
an incorrect prediction can be substantially more costly than deferring an
uncertain case for further review. This motivates \emph{selective
classification}, where a classifier is allowed to abstain from predictions
that are considered unreliable.

Selective classification, or classification with a reject option, originates from
the decision-theoretic formulation of Chow~\cite{chow1970optimum}, where a classifier
may abstain from predictions deemed unreliable. Modern formulations characterize
this task through the trade-off between coverage and selective risk
\cite{elyaniv2010foundations}. 

A common approach constructs the selection function by thresholding post-hoc
confidence scores. Hendrycks and Gimpel~\cite{hendrycks2017baseline} established
Maximum Softmax Probability (MSP) as a simple baseline, while subsequent work
improved confidence estimation through calibration~\cite{guo2017calibration}
and input perturbation~\cite{liang2018enhancing}. Alternatively, learned
selective prediction methods jointly optimize the predictor and selection
function. SelectiveNet~\cite{geifman2019selectivenet}, for example, incorporates
a dedicated selection head with a prescribed coverage constraint, while Deep
Gamblers~\cite{ziyin2019deepgamblers} formulates abstention through portfolio
optimization. Despite their effectiveness, these approaches predominantly rely
on scalar confidence estimates or predefined selection constraints, which may
limit their adaptability to varying ambiguity levels.

An alternative paradigm derives uncertainty from the geometry of intermediate
representation spaces. Lee et al.~\cite{lee2018simple} employ Mahalanobis
distance under class-conditional Gaussian assumptions, while the Trust
Score~\cite{jiang2018trustscore} and deep $k$-Nearest Neighbors~\cite{sun2022knn}
exploit relative class geometry and local neighborhood structure, respectively.
These methods establish the utility of representation geometry for uncertainty
estimation, but primarily evaluate instances individually with respect to class
distributions or local neighborhoods. They do not explicitly exploit collective
geometric structures formed by hard-to-classify samples.

Hard-example mining provides complementary evidence that difficult instances
constitute an informative subset of the data distribution. OHEM
\cite{shrivastava2016ohem} and Focal Loss~\cite{lin2017focalloss} identify and
re-weight difficult samples, while hard-negative mining plays a central role in
contrastive representation learning~\cite{robinson2021hardnegative}. In NLP,
Dataset Cartography~\cite{swayamdipta2020cartography} further demonstrated that
training dynamics can partition samples into functionally distinct groups,
including easy-to-learn, hard-to-learn, and ambiguous instances.

However, existing approaches primarily characterize difficult instances through
instance-level statistics such as loss, confidence, or prediction variability.
GCUL instead investigates whether these instances exhibit collective geometric
structure in the latent space, and exploits such structure to construct a
data-adaptive rejection region.

In this paper, we study short-text multi-class emotion classification and evaluate the proposed framework on three benchmark datasets covering different numbers of emotion categories. For detailed information about the dataset, please refer to the appendix~\ref{app:dataset_stats}.

To understand why conventional selective classification fails in complex Natural Language Processing (NLP) tasks, we begin by investigating the geometric topology of error-prone representations in the embedding space of Pre-trained Language Models (PLMs).

\paragraph{Pilot observation: geometric confusion attractors}
\label{subsec:pilot_study}
Traditional selective classification paradigms predominantly rely on instance-level confidence scoring (e.g., \cite{chow1970optimum, geifman2017selective}), effectively treating prediction uncertainty as isolated, point-wise evaluation tasks. Consequently, they inherently overlook the rich geometric topology and localized manifold clustering of hard-to-classify representations in deep embedding spaces.
Consequently, existing approaches rely almost exclusively on instance-level confidence thresholds (e.g., Softmax entropy or MC-Dropout variances) to reject unreliable predictions.

\begin{figure}[H]
  \centering
  \includegraphics[width=\columnwidth]{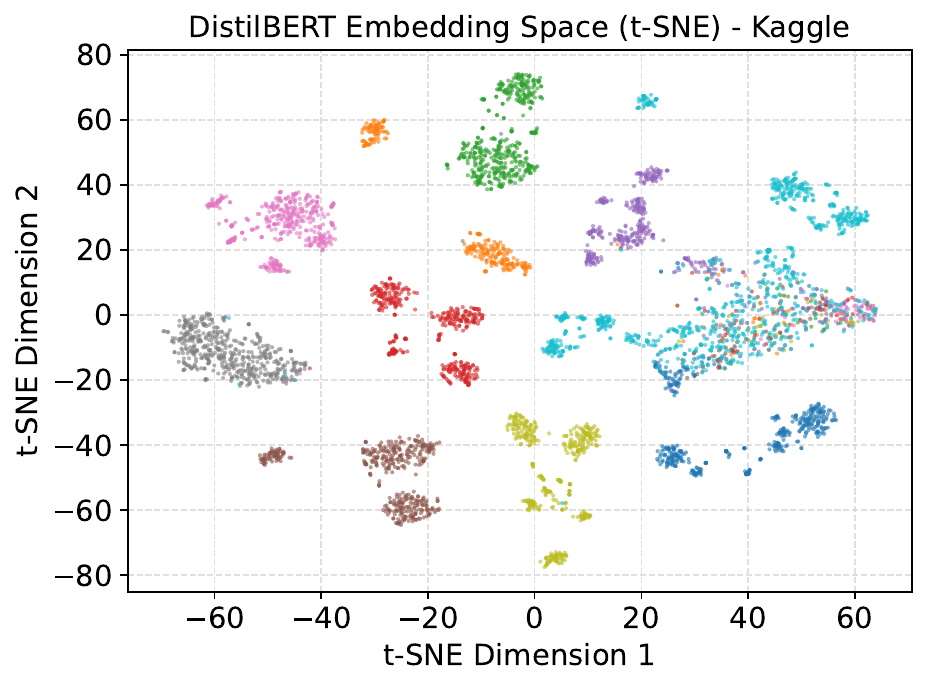}
  \caption{T-SNE (Each color represents a category.)}
  \label{fig:tsne}
\end{figure}
However, as illustrated in Figure \ref{fig:tsne}, analyzing the BERT\cite{devlin2019bert} feature representations via t-SNE visualization reveals a fundamentally different structural pattern. Rather than being uniformly dispersed as random noise, misclassified and ambiguous samples naturally form localized, compact clusters within the manifold. This phenomenon may stems from the deep contextual encoding of PLMs: inputs sharing subtle semantic ambiguities (e.g., overlapping emotion categories or domain-specific polysemy) are projected into adjacent regions in the representation space.

Through embedding space visualization, we discover that misclassified samples form a \textbf{compact, linearly separable cluster} — a phenomenon we term the ``confusion cluster''. This suggests that for strong modern encoders, a significant portion of systematic errors concentrates along intrinsic boundaries of semantic ambiguity in the representation space, rather than pure stochastic noise.

Based on this observation, we propose a \textbf{Guided Clustering-based Uncertain Learning (GCUL)} framework that explicitly models ambiguous samples by introducing an uncertainty class. GCUL operates in three phases:
    \begin{itemize}
        \item \textbf{Phase 1:} Train an initial 11-class classifier to obtain preliminary predictions and identify misclassified samples.
        \item \textbf{Phase 2:} Apply Linear Discriminant Analysis (LDA) for supervised dimensionality reduction, then perform K-Means clustering with 12 initial centers (11 class centers + 1 confusion center) to detect confusing samples. Clusters with lowest purity are relabeled as \texttt{uncertain}.
        \item \textbf{Phase 3:} Retrain a new 12-class classifier (11 original emotions + \texttt{uncertain}), allowing the model to reject ambiguous cases and focus on clearly distinguishable emotions.
    \end{itemize}

Beyond the algorithmic framework, we develop a theoretical analysis of when
such geometric rejection can provide positive operational utility. We
formulate the trade-off between error reduction and rejection cost through a
GCUL selectivity score, and derive sufficient geometric conditions in terms of
error capture, clean-sample contamination, centroid separation, and directional
dispersion. This leads to a critical rejection-cost threshold
$\lambda^*$ that can be evaluated before completing an extensive selective
classification search. The resulting analysis provides not only a mechanism
for constructing a rejection region, but also a principled criterion for
determining whether the underlying representation geometry is favorable to
selective prediction.

Empirically, benchmark experiments indicate that GCUL substantially improves selective accuracy on datasets exhibiting a concentrated confusion structure. Additionally, Comparsion experiments cross representation dimension show that GCUL achieves competitive performance without requiring manually specified rejection rates, as its rejection boundary is determined by the underlying representation geometry, whereas traditional methods remain highly sensitive to predefined operational parameters. Synthetic experiments further proved the accuracy and stability of the theoretical feasible framework.  These results suggest that the main value of GCUL is not universal superiority over existing selective classifiers, but but a data-adaptive geometric formulation of rejection accompanied by an explicit theoretical feasibility criterion.

% ==============================================================================
% Section 2: The GCUL Framework
% ==============================================================================
\section{The GCUL framework}
\label{sec:framework}

Motivated by the insight that hard-to-classify samples inhabit compact manifolds rather than isolated points, we shift the selective classification paradigm from instance-level thresholding to cluster-guided uncertainty learning.

\subsection{The three-phase GCUL pipeline}
\label{subsec:pipeline}
Based on the observation above, we propose a three-phase framework that explicitly identifies and handles confusing samples by introducing an \texttt{uncertain} class. The main objective is to separate the gathering place of confusion samples, with output uncertain, make other classifications more easily to separate. To overcome the problem that the \texttt{uncertain} samples have no special labels, unsupervised clustering was introduced. 
Algorithm~\ref{alg:gcul} summarizes the complete GCUL framework.

\begin{algorithm}[htbp]
\caption{Guided clustering-based uncertain learning (GCUL)}
\label{alg:gcul}
\begin{algorithmic}[1]
\REQUIRE Training data $\mathcal{D}_{\mathrm{train}}$, number of clusters $K = |\mathcal{Y}| + 1$, purity threshold $\tau$
\ENSURE Trained K-class classifier $f_2$
\STATE \textbf{Phase 1:} Train $f_1$ on $\mathcal{D}_{\mathrm{train}}$ ($|\mathcal{Y}|$ classes)
\STATE \textbf{Phase 2:}
\STATE \quad Reduce dimensionality: $Z_{\mathrm{red}} = \text{PCA} (Z, y)$ (Optional)
\STATE \quad Compute class medians $\mu_c$ for $c \in \mathcal{Y}$ and confusion median $\mu_{\mathrm{conf}}$ (the center of all misclassified samples in $\mathcal{D}_{\mathrm{train}}$)
\STATE \quad Run K-Means with initial centers $\{\mu_c\}_{c=1}^{|\mathcal{Y}|} \cup \{\mu_{\mathrm{conf}}\}$
\STATE \quad Calculate the purity for each cluster
\STATE \quad Label all samples of the cluster which has lowest purity.
\STATE \textbf{Phase 3:} Train $f_2$ on relabeled $\mathcal{D}_{\mathrm{train}}'$  ($|\mathcal{Y}|$+1 classes)
\RETURN $f_2$
\end{algorithmic}
\end{algorithm}

\subsection{Operational risk formulation and utility gain}
\label{subsec:risk_formulation}
While the design of GCUL originates from intuitive geometric observations—specifically, that hard-to-classify representations form compact clusters in PLM embedding spaces—relying solely on empirical intuition presents a significant challenge in selective classification. A model that achieves selective gains on one dataset may fail unpredictably on another if the underlying conditions for valid rejection are not mathematically formalized.

To transcend a purely empirical trial-and-error design, it is imperative to establish a formal theoretical boundary that governs when and why GCUL is \textit{truly effective}. Rather than assuming universal superiority, we seek to identify the precise parametric conditions under which cluster-guided uncertainty learning strictly yields positive utility over unrejected base models.

Specifically, constructing a \textit{theoretical feasibility domain} serves three core purposes:
\begin{itemize}
    \item \textbf{Eliminating empirical artifacts:} It separates genuine geometric separability from empirical hyperparameter overfitting and variance induced by data partitioning.
    \item \textbf{Defining operational boundaries:} It explicitly maps out the decision region—parameterized by rejection cost penalties and cluster error purity—where selective rejection guarantees a strict error reduction.
    \item \textbf{Providing deployable guarantees:} It offers a verifiable sufficient condition for practitioners to determine \textit{a priori} whether applying GCUL to a given task domain will produce guaranteed performance gains.
\end{itemize}

% ==============================================================================
% Section 3: Theoretical Feasibility Analysis
% ==============================================================================
\section{Theoretical feasibility analysis}
\label{sec:theoretical_analysis}
In this section, we establish the mathematical modeling for GCUL and derive the sufficient conditions that guarantee its theoretical feasibility and operational efficacy.

\subsection{Mathematical framework and foundational assumptions}
\label{subsec:assumptions1}
\paragraph{Problem formulation and mathematical notations}
Let $\mathcal{X}$ denote the input text space and $\mathcal{Y} = \{1, \dots, C\}$ represent $C$ clean categories. Given a dataset $\mathcal{D} = \{(x_i, y_i)\}_{i=1}^N$, GCUL operates via three sequential phases:

\textbf{Phase 1 (Initial Classification):} A base classifier $f_1: \mathcal{X} \to \mathcal{Y}$ (e.g., Logistic Regression) yields predictions $\hat{y}_i = f_1(x_i)$. The misclassified error pool is identified as:
\begin{equation}
\mathcal{E} \coloneqq \left\{ i \in [N] : \hat{y}_i \neq y_i \right\}, \quad \text{with base error } \varepsilon = \frac{|\mathcal{E}|}{N}.
\end{equation}

\textbf{Phase 2 (Guided Clustering and Relabeling):} Contextual embeddings $\mathbf{h}_i = \phi(x_i) \in \mathbb{R}^{d_0}$ ($d_0 = 768$) are extracted and projected via LDA matrix $\mathbf{W}_{\mathrm{LDA}} \in \mathbb{R}^{d_0 \times d}$ into reduced vectors $\mathbf{z}_i \in \mathbb{R}^d$ ($d \le C-1$). We construct $K = C+1$ initial centroids $\{\boldsymbol{\mu}_k^{(0)}\}_{k=1}^{C+1}$: $C$ clean class medians $\boldsymbol{\mu}_c^{(0)} = \mathrm{median}(\{\mathbf{z}_i : \hat{y}_i = c\})$ and one explicit \emph{confusion attractor centroid} $\boldsymbol{\mu}_{\mathrm{conf}}^{(0)} = \mathrm{median}(\{\mathbf{z}_i : i \in \mathcal{E}\})$. After $K$-Means partitions the space into disjoint clusters $\{\mathcal{C}_k\}_{k=1}^{K}$, any cluster with empirical majority purity $\mathrm{purity}(\mathcal{C}_k) < \tau$ (threshold $\tau \in (0, 1)$) is re-annotated with an uncertainty label $y_{\mathrm{uncert}} \coloneqq C+1$. This transforms $\mathcal{D}$ into $\mathcal{D}'$ over the augmented label space $\mathcal{Y}' = \mathcal{Y} \cup \{y_{\mathrm{uncert}}\}$.

\textbf{Phase 3 (Selective Classifier Retraining):} A final classifier $f_2: \mathcal{X} \to \mathcal{Y}'$ is retrained on $\mathcal{D}'$. During inference, predicting $y_{\mathrm{uncert}}$ triggers a deferred reject option for human review; otherwise, it outputs a clean prediction in $\mathcal{Y}$.

\paragraph{Foundational assumptions for feasibility analysis}
\label{subsec:assumptions2}

To analyze the feasibility boundary where GCUL strictly improves utility, we formalize the geometric properties within the ambient feature space $\mathbb{R}^{d_0}$ ($d_0 = 768$).

\begin{remark}[Space Independence]
Linear projections (LDA/PCA) are used solely for computational efficiency. Theoretical bounds are established in $\mathbb{R}^{d_0}$, as linear projections preserve centroid topological distances up to a bounded scale factor.
\end{remark}

\begin{assumption}[Error and Cost Bounds]
\label{ass:error_cost}
The base error satisfies $0 < \varepsilon < 1$. The normalized rejection cost $\lambda \coloneqq \eta_2 / \eta_1$ strictly satisfies $0 \le \lambda < 1$, where $\eta_1, \eta_2$ denote unit costs for misclassification and rejection, respectively.
\end{assumption}

\begin{assumption}[Statistical Consistency of Error Pool]
\label{ass:error_pool_approximation}
The empirical mean $\boldsymbol{\mu}_{\mathrm{err}}$ and directional variance $\sigma_{\mathrm{err}, c}^2$ computed over $\mathcal{E}$ serve as consistent estimates for the true confusion attractor centroid $\boldsymbol{\mu}_{\mathrm{mix}}$ and variance proxy $\sigma_{\mathrm{mix}, c}^2$ ($\boldsymbol{\mu}_{\mathrm{mix}} \approx \boldsymbol{\mu}_{\mathrm{err}}$, $\sigma_{\mathrm{mix}, c} \approx \sigma_{\mathrm{err}, c}$).
\end{assumption}

\begin{assumption}[Unimodal Single Attractor]
\label{ass:single_attractor}
The misclassified samples in $\mathcal{E}$ form a unimodal, dominant confusion attractor $\mathcal{C}_{\mathrm{conf}}$ centered at $\boldsymbol{\mu}_{\mathrm{conf}}$ in $\mathbb{R}^{d_0}$, rather than fragmenting into multiple disjoint sub-clusters.
\end{assumption}

\begin{assumption}[Attractor Initialization Stability]
\label{ass:initialization_stability}
The explicit warm-start $\boldsymbol{\mu}_{\mathrm{conf}}^{(0)}$ lies within the basin of attraction of $\mathcal{C}_{\mathrm{conf}}$, ensuring that the updated centroid $\boldsymbol{\mu}_{\mathrm{conf}}$ remains topologically stable during $K$-Means without collapsing into any clean class core.
\end{assumption}

\begin{assumption}[Asymptotic Gaussianity and Sub-Gaussian Dispersion]
\label{ass:asymptotic_gaussian_subgaussian}
In high-dimensional space $\mathbb{R}^{d_0}$ ($d_0 \gg 1$), for any clean class $c \in \mathcal{Y}$, the 1D marginal projection of feature $\mathbf{z}$ along $\hat{\mathbf{d}}_c = (\boldsymbol{\mu}_{\mathrm{conf}} - \boldsymbol{\mu}_c) / \|\boldsymbol{\mu}_{\mathrm{conf}} - \boldsymbol{\mu}_c\|_2$ asymptotically approaches a univariate Gaussian $\mathcal{N}(\mu_c, \sigma_c^2)$ \cite{diaconis1984asymptotics, klartag2007central}. These projections exhibit sub-Gaussian tail decay with variance proxies $\sigma_c^2, \sigma_{\mathrm{mix}, c}^2 > 0$, causing Voronoi tail probabilities to decay exponentially with centroid distance $D_c = \|\boldsymbol{\mu}_{\mathrm{conf}} - \boldsymbol{\mu}_c\|_2$.
\end{assumption}

\subsection{Selectivity score and monotonicity}
\label{subsec:selectivity_score}

To quantify GCUL's utility gain under a realistic operational cost structure, let $\eta_1 > 0$ and $\eta_2 \ge 0$ denote the unit costs of misclassification and rejection, respectively. Define the relative rejection cost as $\lambda \coloneqq \eta_2 / \eta_1 \in [0, 1)$. 

\textbf{Operational risk \& utility gain.} The baseline risk is $R_{\mathrm{base}} = N \varepsilon \eta_1$. Under GCUL, rejection is treated as a service failure event rather than a free option. Evaluating the conditional error rate $\varepsilon_{\mathrm{clean}}$ on accepted samples gives the operational risk $R_{\mathrm{GCUL}} =  \varepsilon_{\mathrm{clean}} \eta_1 +  \rho \eta_2$, where $\rho$ is the proportion of rejected samples. The normalized utility gain is:
\begin{equation}
\mathcal{U}(\lambda) \coloneqq \frac{R_{\mathrm{base}} - R_{\mathrm{GCUL}}}{\eta_1} = \varepsilon - \varepsilon_{\mathrm{clean}} - \lambda \rho.
\end{equation}

\textbf{Feasibility condition.} Expressing the rejection rate $\rho$ and conditional clean error rate $\varepsilon_{\mathrm{clean}}$ via the error capture rate $r_1 \coloneqq |\mathcal{C}_{\mathrm{conf}} \cap \mathcal{E}| / |\mathcal{E}|$ and clean contamination rate $r_2 \coloneqq |\mathcal{C}_{\mathrm{conf}} \setminus \mathcal{E}| / (N - |\mathcal{E}|)$, we have:
\begin{equation}
\rho = \varepsilon r_1 + (1-\varepsilon)r_2, \qquad \varepsilon_{\mathrm{clean}} = \frac{\varepsilon(1-r_1)}{1 - \rho}.
\end{equation}
Solving the strictly positive utility condition $\mathcal{U}(\lambda) > 0$ yields a data-driven effectiveness criterion $S_{\mathrm{GCUL}} > \lambda$, where $S_{\mathrm{GCUL}}$ is the \emph{GCUL Selectivity Score}:

{\scalefont{0.85}
\begin{equation}
\label{eq:selectivity_score}
S_{\mathrm{GCUL}} \coloneqq \frac{(1-\varepsilon)(r_1 - r_2)}{r_1 + \frac{1-\varepsilon}{\varepsilon}r_2 - \varepsilon r_1^2 - \frac{(1-\varepsilon)^2}{\varepsilon}r_2^2 - 2(1-\varepsilon)r_1r_2}.
\end{equation}
}
$S_{\mathrm{GCUL}}$ measures the net selectivity of the confusion attractor, normalized by feature distribution geometry. 

\begin{theorem}[Strict Monotonicity of Selectivity Score]
\label{thm:monotonicity}
For any $0 < \varepsilon < 1$ and $0 < r_2 < r_1 < 1$, $S_{\mathrm{GCUL}}$ is strictly monotonically increasing in $r_1$ and strictly monotonically decreasing in $r_2$:
\begin{equation}
\frac{\partial S_{\mathrm{GCUL}}}{\partial r_1} > 0, \qquad \frac{\partial S_{\mathrm{GCUL}}}{\partial r_2} < 0.
\end{equation}
\end{theorem}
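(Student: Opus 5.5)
The approach is to first simplify $S_{\mathrm{GCUL}}$ into a compact form, and then differentiate with the quotient rule in terms of the rejection rate $\rho$. Start from the definitions $\rho=\varepsilon r_1+(1-\varepsilon)r_2$ and $\varepsilon_{\mathrm{clean}}=\varepsilon(1-r_1)/(1-\rho)$. We have
\[
\varepsilon-\varepsilon_{\mathrm{clean}}=\frac{\varepsilon(r_1-\rho)}{1-\rho},
\qquad
r_1-\rho=(1-\varepsilon)(r_1-r_2).
\]
Hence $\mathcal{U}(\lambda)>0$ is equivalent to $\lambda<S$, where
\[
S_{\mathrm{GCUL}}=\frac{\varepsilon(1-\varepsilon)(r_1-r_2)}{\rho(1-\rho)}.
\]
I will check that expanding $\rho(1-\rho)/\varepsilon$ reproduces exactly the denominator in \eqref{eq:selectivity_score}; this is a routine expansion. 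Since $\rho$ is a convex combination of $r_1,r_2\in(0,1)$, we have $0<\rho<1$. So $g(\rho)\coloneqq\rho(1-\rho)>0$, and $S_{\mathrm{GCUL}}$ is smooth on the stated domain.

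\textbf{Derivative in $r_1$.} Let $N=\varepsilon(1-\varepsilon)(r_1-r_2)$. Then $\partial N/\partial r_1=\varepsilon(1-\varepsilon)$, $\partial\rho/\partial r_1=\varepsilon$, and $g'(\rho)=1-2\rho$. After factoring out $\varepsilon(1-\varepsilon)/g^2>0$, the sign of $\partial S/\partial r_1$ equals the sign of
\[
\rho-\rho^2-\varepsilon(r_1-r_2)(1-2\rho).
\]
The key identity is $\varepsilon(r_1-r_2)=\rho-r_2$. Substituting it turns the expression into
\[
r_2-2\rho r_2+\rho^2=(\rho-r_2)^2+r_2(1-r_2).
\]
This is strictly positive because $0<r_2<1$.

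\textbf{Derivative in $r_2$.} Here $\partial N/\partial r_2=-\varepsilon(1-\varepsilon)$ and $\partial\rho/\partial r_2=1-\varepsilon$. After factoring, the sign of $\partial S/\partial r_2$ is the negative of the sign of
\[
\rho-\rho^2+(1-\varepsilon)(r_1-r_2)(1-2\rho).
\]
Using the companion identity $(1-\varepsilon)(r_1-r_2)=r_1-\rho$, this becomes
\[
r_1-2\rho r_1+\rho^2=(r_1-\rho)^2+r_1(1-r_1)>0,
\]
so $\partial S/\partial r_2<0$.

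The only step needing care is the one I expect to be the main obstacle. Working directly with the expanded denominator in \eqref{eq:selectivity_score}, the sign of $1-2\rho$ is indeterminate, so the derivative's sign is not evident. The plan circumvents this by rewriting everything through $\rho$ and the two linear identities $\rho-r_2=\varepsilon(r_1-r_2)$ and $r_1-\rho=(1-\varepsilon)(r_1-r_2)$. Each numerator then collapses into a perfect square plus $r(1-r)$, which is sign-definite without any case split on $\rho\gtrless 1/2$.
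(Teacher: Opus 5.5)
Your proof is correct and is essentially the paper's own: both apply the quotient rule and reduce each numerator to a square plus $r(1-r)$. Via your two linear identities, your $(\rho-r_2)^2+r_2(1-r_2)$ and $(r_1-\rho)^2+r_1(1-r_1)$ are exactly $\varepsilon$ times the paper's $\varepsilon(r_1-r_2)^2+\frac{r_2}{\varepsilon}(1-r_2)$ and $\frac{(1-\varepsilon)^2}{\varepsilon}(r_1-r_2)^2+\frac{r_1}{\varepsilon}(1-r_1)$. Routing the chain rule through $\rho(1-\rho)$ makes visible the simplification the paper states without showing, and your convex-combination remark justifies the positivity of the denominator, which the paper only asserts.
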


Theorem~\ref{thm:monotonicity} confirms that increasing error pool capture ($r_1$) or reducing clean sample contamination ($r_2$) strictly expands GCUL's feasible operational regime. Complete algebraic derivations and proofs are deferred to Appendix~\ref{app:utility_proofs}.

\subsection{Effectiveness threshold and feasibility region}
\label{subsec:feasibility_threshold}

Having established the selectivity criterion $S_{\mathrm{GCUL}} > \lambda$, we now derive geometric closed-form bounds for the capture rate $r_1$ and contamination rate $r_2$ using high-dimensional measure concentration \citep{vershynin2018high}.

\textbf{1D Voronoi Reduction.} For each class $c \in [C]$, let $\mathbf{d}_c \coloneqq \boldsymbol{\mu}_{\mathrm{mix}} - \boldsymbol{\mu}_c$, $D_c \coloneqq \|\mathbf{d}_c\|_2$, and $\hat{\mathbf{d}}_c \coloneqq \mathbf{d}_c / D_c$ be the connecting unit direction. The decision boundary separating the confusion cluster $\mathcal{C}_{\mathrm{mix}}$ and class $c$ under $K$-Means Voronoi partitioning reduces to a 1D thresholding rule on the projected coordinate $z_c(x) \coloneqq \langle x - \boldsymbol{\mu}_{\mathrm{mix}}, \hat{\mathbf{d}}_c \rangle$: a sample is assigned to $\mathcal{C}_{\mathrm{mix}}$ relative to class $c$ if and only if $z_c(x) < D_c/2$.

\textbf{Concentration of directional projection measures.} Under Assumption~\ref{ass:asymptotic_gaussian_subgaussian}, the 1D marginal projections of the misclassified pool and clean class $c$ converge weakly to univariate Gaussians $\mathcal{N}(0, \sigma_{\mathrm{mix},c}^2)$ and $\mathcal{N}(-D_c, \sigma_c^2)$, respectively, where $\sigma_{\mathrm{mix},c}^2 = \hat{\mathbf{d}}_c^\top \Sigma_{\mathrm{mix}} \hat{\mathbf{d}}_c$ and $\sigma_c^2 = \hat{\mathbf{d}}_c^\top \Sigma_c \hat{\mathbf{d}}_c$ represent directional variance proxies.

\begin{lemma}[Boundary Loss and Contamination Probabilities]
\label{lem:projection_tails}
The boundary loss probability $p_c^{\mathrm{loss}}$ (a misclassified sample leaking out of $\mathcal{C}_{\mathrm{mix}}$) and clean contamination probability $q_c^{\mathrm{contam}}$ (a clean sample falsely captured into $\mathcal{C}_{\mathrm{mix}}$) are strictly quantified by Gaussian error function tails:
{\scalefont{0.9}
\begin{align}
p_c^{\mathrm{loss}} &\coloneqq \mathbb{P}\left\{ z_c > \frac{D_c}{2} \right\} = \frac{1}{2}\left[1 - \operatorname{erf}\left( \frac{D_c}{2\sqrt{2}\sigma_{\mathrm{mix},c}} \right)\right], \label{eq:p_loss_exact} \\
q_c^{\mathrm{contam}} &\coloneqq \mathbb{P}\left\{ z_c < -\frac{D_c}{2} \right\} = \frac{1}{2}\left[1 - \operatorname{erf}\left( \frac{D_c}{2\sqrt{2}\sigma_c} \right)\right]. \label{eq:q_contam_exact}
\end{align}
}

\end{lemma}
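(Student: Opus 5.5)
The plan is to reduce both probabilities to a standard Gaussian tail computation along the connecting direction $\hat{\mathbf{d}}_c$, using the 1D Voronoi reduction and the concentration statement that precede the lemma.

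\textbf{Step 1 (Voronoi reduction).} For a point $x$, compare its squared distances to $\boldsymbol{\mu}_{\mathrm{mix}}$ and to $\boldsymbol{\mu}_c$. Expanding $\|x-\boldsymbol{\mu}_c\|_2^2 - \|x-\boldsymbol{\mu}_{\mathrm{mix}}\|_2^2$ and writing $x-\boldsymbol{\mu}_c = (x-\boldsymbol{\mu}_{\mathrm{mix}}) + \mathbf{d}_c$ gives
\[
\|x-\boldsymbol{\mu}_c\|_2^2-\|x-\boldsymbol{\mu}_{\mathrm{mix}}\|_2^2 = 2D_c\, z_c(x) + D_c^2 .
\]
So the pairwise Voronoi boundary between $\boldsymbol{\mu}_{\mathrm{mix}}$ and $\boldsymbol{\mu}_c$ is a single hyperplane orthogonal to $\hat{\mathbf{d}}_c$, and the assignment is decided by $z_c(x)$ alone.

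One caution on conventions. With $\mathbf{d}_c=\boldsymbol{\mu}_{\mathrm{mix}}-\boldsymbol{\mu}_c$, the centroid $\boldsymbol{\mu}_c$ sits at coordinate $z_c=-D_c$ and $\boldsymbol{\mu}_{\mathrm{mix}}$ at $0$. The midpoint hyperplane is therefore $z_c=-D_c/2$, not $+D_c/2$ as the text states. I will handle the sign consistently by symmetry of the Gaussian tails. Leakage of a confusion sample is the event that it crosses the midpoint toward $\boldsymbol{\mu}_c$, a deviation of magnitude $D_c/2$ from its own mean. Contamination of a clean sample is the event that it crosses toward $\boldsymbol{\mu}_{\mathrm{mix}}$, again a deviation of magnitude $D_c/2$ from its mean. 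The probabilities stated in the lemma are exactly these one-sided deviations of size $D_c/2$, which is all that matters.

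\textbf{Step 2 (Gaussian tails).} By Assumption~\ref{ass:asymptotic_gaussian_subgaussian} and the concentration paragraph, $z_c$ is $\mathcal{N}(0,\sigma_{\mathrm{mix},c}^2)$ for error-pool samples, with $\boldsymbol{\mu}_{\mathrm{mix}}\approx\boldsymbol{\mu}_{\mathrm{err}}$ by Assumption~\ref{ass:error_pool_approximation}. For clean class-$c$ samples it is $\mathcal{N}(-D_c,\sigma_c^2)$. For $Z\sim\mathcal{N}(0,\sigma^2)$ and $t>0$,
\[
\mathbb{P}(Z>t)=1-\Phi(t/\sigma)=\tfrac12\bigl[1-\operatorname{erf}(t/(\sqrt2\sigma))\bigr].
\]
Applying this with $t=D_c/2$ and $\sigma=\sigma_{\mathrm{mix},c}$ gives \eqref{eq:p_loss_exact}. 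For the clean class, I recenter via $z_c+D_c\sim\mathcal{N}(0,\sigma_c^2)$, so that crossing the midpoint becomes a deviation of $D_c/2$. Symmetry then gives \eqref{eq:q_contam_exact}.

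\textbf{Step 3 (justifying ``strict'' equality).} The main obstacle is that the Gaussian law is only asymptotic and weak; the exact erf identities hold only in the limit. I would state the result as exact under the Gaussian marginal and note two supporting facts. First, weak convergence gives convergence of these probabilities, because the boundary hyperplane has limit-measure zero (Portmanteau). Second, the sub-Gaussian clause of Assumption~\ref{ass:asymptotic_gaussian_subgaussian} supplies the non-asymptotic bound $\le \exp(-D_c^2/(8\sigma^2))$ up to constants, matching the exponential decay. Assumptions~\ref{ass:single_attractor} and \ref{ass:initialization_stability} ensure that the converged centroid is the attractor's centroid rather than some other fixed point.
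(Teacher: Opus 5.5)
Your proposal is correct and follows essentially the same route as the paper's proof: project onto $\hat{\mathbf{d}}_c$, use the $\mathcal{N}(0,\sigma_{\mathrm{mix},c}^2)$ marginal for error samples and the marginal recentered at $\boldsymbol{\mu}_c$ for clean samples, then convert the one-sided tail at $D_c/2$ into the $\operatorname{erf}$ form; the paper does this last step by the explicit substitution $t=u/(\sqrt{2}\sigma_{\mathrm{mix},c})$ in the density integral rather than quoting $1-\Phi$. Your sign remark is right too: the paper's own Voronoi derivation arrives at $z_c>-D_c/2$ and then silently flips the measuring direction, and read literally, $\mathbb{P}\{z_c<-D_c/2\}$ for a clean sample centered at $-D_c$ would exceed $1/2$, so your explicit recentering-plus-symmetry argument is exactly the repair the paper makes implicitly.
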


\begin{theorem}[Global Rate Bounds and Feasibility Threshold]
\label{thm:global_rate_bounds}
Let $\pi_c$ and $\varepsilon_c$ denote the prior probability and conditional base error rate of class $c$, respectively. The global error capture rate $r_1$ and clean contamination rate $r_2$ satisfy:
\begin{align}
r_1 &\ge 1 - \frac{1}{\varepsilon} \sum_{c=1}^C \pi_c \varepsilon_c \cdot \frac{1}{2}\left[1 - \operatorname{erf}\left(\frac{D_c}{2\sqrt{2}\,\sigma_{\mathrm{mix},c}}\right)\right], \label{eq:r1_bound} \\
r_2 &\le \frac{1}{1-\varepsilon} \sum_{c=1}^C \pi_c \cdot \frac{1}{2}\left[1 - \operatorname{erf}\left(\frac{D_c}{2\sqrt{2}\,\sigma_c}\right)\right]. \label{eq:r2_bound}
\end{align}
Consequently, under Assumptions 1–5, GCUL is guaranteed to be operationally effective ($S_{\mathrm{GCUL}} > \lambda$) whenever the geometric signal-to-noise ratios $D_c / \sigma_{\mathrm{mix},c}$ and $D_c / \sigma_c$ satisfy the condition evaluated on the RHS bounds of \eqref{eq:r1_bound} and \eqref{eq:r2_bound}.
\end{theorem}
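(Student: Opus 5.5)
The plan is a union-bound decomposition over the $C$ pairwise Voronoi boundaries, with each pairwise leakage term supplied by Lemma~\ref{lem:projection_tails}. The result is then pushed through Theorem~\ref{thm:monotonicity} to obtain the feasibility statement.

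\textbf{Decomposing the losses by class.} For the capture rate, we write $1 - r_1 = |\mathcal{E}\setminus\mathcal{C}_{\mathrm{mix}}|/|\mathcal{E}|$ and decompose the lost error mass by the true class of each misclassified sample. In population terms, the misclassified mass contributed by class $c$ is $\pi_c\varepsilon_c$, and these contributions sum to $\varepsilon$.

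A misclassified sample originating near class $c$ can leave $\mathcal{C}_{\mathrm{mix}}$ only by crossing some Voronoi bisector. By Assumptions~\ref{ass:single_attractor} and \ref{ass:initialization_stability}, the confusion centroid stays a single stable attractor. We therefore attribute the exit of such a sample to the bisector between $\boldsymbol{\mu}_{\mathrm{mix}}$ and $\boldsymbol{\mu}_c$, using the 1D reduction $z_c > D_c/2$.

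\textbf{Bounding each term.} Lemma~\ref{lem:projection_tails} bounds the conditional probability of this event by $p_c^{\mathrm{loss}}$. Assumption~\ref{ass:error_pool_approximation} lets us identify the empirical pool statistics with $\boldsymbol{\mu}_{\mathrm{mix}}$ and $\sigma_{\mathrm{mix},c}$. Summing over $c$ gives
\[
1-r_1 \le \frac{1}{\varepsilon}\sum_{c}\pi_c\varepsilon_c\, p_c^{\mathrm{loss}},
\]
which is \eqref{eq:r1_bound}.

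Symmetrically, a clean sample of class $c$ enters $\mathcal{C}_{\mathrm{mix}}$ only if it lies on the mixture side of the $c$-bisector, i.e.\ $z_c<-D_c/2$ in the lemma's convention. This has probability $q_c^{\mathrm{contam}}$. We bound the clean mass of class $c$, $\pi_c(1-\varepsilon_c)$, by $\pi_c$ and normalize by the clean total $1-\varepsilon$. This gives \eqref{eq:r2_bound}.

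\textbf{Main obstacle.} The hard step is justifying that leakage through the bisector with class $c$ is the relevant event for samples tied to class $c$. In principle a sample could exit via another class's bisector, and the multi-boundary Voronoi cell is an intersection of half-spaces.

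My first attempt is to bound the cell complement by the union of half-space complements over all $c'$. I would then argue, under Assumption~\ref{ass:asymptotic_gaussian_subgaussian}, that the cross terms ($c'\neq c$) are dominated because their tails decay exponentially in $D_{c'}$. These cross terms could be absorbed either by redefining the sum or by accepting the bound as the paper's first-order (dominant-boundary) approximation.

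I would state this approximation explicitly rather than hide it, since the Gaussian equalities of the lemma are asymptotic anyway.

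\textbf{Feasibility consequence.} Let $\underline r_1$ and $\overline r_2$ denote the RHS of \eqref{eq:r1_bound} and \eqref{eq:r2_bound}. Theorem~\ref{thm:monotonicity} says $S_{\mathrm{GCUL}}$ increases in $r_1$ and decreases in $r_2$ on $0<r_2<r_1<1$, so
\[
S_{\mathrm{GCUL}}(r_1,r_2)\ge S_{\mathrm{GCUL}}(\underline r_1,\overline r_2).
\]
To apply this, I need the monotone path between the two points to stay inside the domain, which requires checking $\overline r_2<\underline r_1$. This holds whenever the bound pair itself is admissible. Hence if $S_{\mathrm{GCUL}}(\underline r_1,\overline r_2)>\lambda$, which is a condition purely on $D_c/\sigma_{\mathrm{mix},c}$ and $D_c/\sigma_c$, then $S_{\mathrm{GCUL}}>\lambda$. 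That is $\mathcal{U}(\lambda)>0$, as claimed.
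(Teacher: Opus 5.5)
Your route is the paper's route. The appendix proof decomposes the error pool by true class, charges class $c$ an expected loss of $N\pi_c\varepsilon_c\,p_c^{\mathrm{loss}}$, bounds the clean count $N\pi_c(1-\varepsilon_c)$ by $N\pi_c$ before multiplying by $q_c^{\mathrm{contam}}$, and then appeals to Theorem~\ref{thm:monotonicity}. Your $r_2$ half is sound for the reason you give: the cell of $\boldsymbol{\mu}_{\mathrm{mix}}$ lies inside each of its half-spaces. Your admissibility requirement $\overline r_2<\underline r_1$ is a genuine extra hypothesis that the paper leaves out. The paper's proof never addresses the obstacle you flag for $r_1$. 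It simply takes the single-bisector attribution for granted and writes $r_1 = 1-\frac{1}{\varepsilon}\sum_c \pi_c\varepsilon_c\,p_c^{\mathrm{loss}}$ as an identity.

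That obstacle is a real gap, and your proposed repair does not close it. The cross terms are not dominated. A class-$c$ error crossing the $(\mathrm{mix},c')$ bisector is again a Gaussian tail at distance $D_{c'}/2$, so it is of the same order as the $c$-term. With equal separations and spreads, your union bound gives $C\,p^{\mathrm{loss}}$ rather than $p^{\mathrm{loss}}$.

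Worse, Section~\ref{subsec:feasibility_threshold} asserts that the \emph{pooled} error projection along every $\hat{\mathbf d}_{c'}$ is $\mathcal N(0,\sigma_{\mathrm{mix},c'}^2)$. Leaving $\mathcal C_{\mathrm{mix}}$ contains each single-bisector crossing event, so in population terms
\[
1-r_1 \;\ge\; \max_{c'} p_{c'}^{\mathrm{loss}} \;\ge\; \sum_{c=1}^C \frac{\pi_c\varepsilon_c}{\varepsilon}\,p_c^{\mathrm{loss}}.
\]
Thus, in the paper's own model, $r_1$ is bounded \emph{above} by the right-hand side of \eqref{eq:r1_bound}, generically strictly. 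No refinement of the union bound can deliver \eqref{eq:r1_bound}, and calling it a first-order approximation is not justified either, since it can underestimate the loss by a factor of order $C$.

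The bound holds, with equality in expectation, only under an additional modelling hypothesis: class-$c$ errors leave $\mathcal C_{\mathrm{mix}}$ exclusively across the $(\mathrm{mix},c)$ bisector. This does not follow from Assumptions~\ref{ass:single_attractor} and \ref{ass:initialization_stability}, nor from the tail decay in Assumption~\ref{ass:asymptotic_gaussian_subgaussian}. State that hypothesis explicitly as an assumption rather than deriving it from tail decay. With it, the rest of your argument, including the feasibility consequence via monotonicity, goes through exactly as in the paper.
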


Theorem~\ref{thm:global_rate_bounds} connects the operational utility directly to feature-space geometry: larger centroid separation $D_c$ relative to directional variance proxies $\sigma_{\mathrm{mix},c}, \sigma_c$ exponentially depresses boundary loss and contamination via $\operatorname{erf}(\cdot)$, ensuring $S_{\mathrm{GCUL}} > \lambda$. Detailed derivations and proofs are provided in Appendix~\ref{app:geometric_proofs}.

\begin{theorem}[Sufficient Condition for GCUL Feasibility]
\label{thm:gcul_effectiveness_sufficient_condition}
By the strict monotonicity of $S_{\mathrm{GCUL}}(r_1, r_2)$ (Theorem~\ref{thm:monotonicity}), substituting the global rate bounds $r_1 \ge \hat{r}_1$ and $r_2 \le \hat{r}_2$ into \eqref{eq:selectivity_score} yields $S_{\mathrm{GCUL}}(r_1, r_2) \ge S_{\mathrm{GCUL}}(\hat{r}_1, \hat{r}_2) \coloneqq \lambda^*$. A \textbf{sufficient condition} for GCUL to be operationally effective ($S_{\mathrm{GCUL}} > \lambda$) is:
\begin{equation}
\lambda^* > \lambda,
\end{equation}
where the critical rejection threshold $\lambda^*$ is evaluated at conservative rates:
\begin{align}
\hat{r}_1 &\coloneqq 1 - \frac{1}{2\varepsilon} \sum_{c=1}^C \pi_c \varepsilon_c \left[1 - \operatorname{erf}\left(\frac{D_c}{2\sqrt{2}\sigma_{\mathrm{mix},c}}\right)\right], \label{eq:r1_hat} \\
\hat{r}_2 &\coloneqq \frac{1}{2(1-\varepsilon)} \sum_{c=1}^C \pi_c \left[1 - \operatorname{erf}\left(\frac{D_c}{2\sqrt{2}\sigma_c}\right)\right]. \label{eq:r2_hat}
\end{align}
\end{theorem}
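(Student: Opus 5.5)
The argument chains three earlier results: the rate bounds of Theorem~\ref{thm:global_rate_bounds}, the monotonicity of Theorem~\ref{thm:monotonicity}, and the equivalence $\mathcal{U}(\lambda)>0 \iff S_{\mathrm{GCUL}}>\lambda$ from Section~\ref{subsec:selectivity_score}. Three steps are needed.

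\textbf{Step 1 (the conservative rates bound the true rates).} Note that $\hat r_1$ and $\hat r_2$ in \eqref{eq:r1_hat}--\eqref{eq:r2_hat} are exactly the right-hand sides of \eqref{eq:r1_bound} and \eqref{eq:r2_bound}, after pulling the factor $\tfrac12$ outside the sum. Theorem~\ref{thm:global_rate_bounds} then gives $r_1 \ge \hat r_1$ and $r_2 \le \hat r_2$ under Assumptions 1--5.

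\textbf{Step 2 (transfer the bounds through $S_{\mathrm{GCUL}}$).} I would argue along a path that keeps every intermediate point in the domain $\{0<r_2<r_1<1\}$ where Theorem~\ref{thm:monotonicity} applies.
\begin{itemize}
\item First move $r_1$ down to $\hat r_1$ with $r_2$ held fixed. Since $\partial S/\partial r_1>0$ on the segment, this gives $S(r_1,r_2)\ge S(\hat r_1,r_2)$.
\item Then move $r_2$ up to $\hat r_2$ with the first argument at $\hat r_1$. Since $\partial S/\partial r_2<0$, this gives $S(\hat r_1,r_2)\ge S(\hat r_1,\hat r_2)=\lambda^*$.
\end{itemize}
Each leg is a one-variable mean value argument. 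Combining them, $S_{\mathrm{GCUL}}(r_1,r_2)\ge\lambda^*>\lambda$, which is the utility criterion.

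\textbf{Main obstacle: keeping the path in the domain.} Theorem~\ref{thm:monotonicity} holds only for $0<r_2<r_1<1$, and it needs the denominator of \eqref{eq:selectivity_score} to stay nonzero along the whole path. To guarantee this I would add the hypothesis $0<\hat r_2<\hat r_1<1$. This is natural, because $\lambda^*>\lambda\ge 0$ already forces $\hat r_1>\hat r_2$ once the denominator is positive.
\begin{itemize}
\item The upper bound $\hat r_1<1$ holds automatically, since the erf tails are positive.
\item The lower bound $\hat r_2>0$ also holds automatically.
\item Given these, both legs of the path satisfy $r_2\le \hat r_2<\hat r_1\le r_1$, so every point visited lies in the domain.
\end{itemize}

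\textbf{Positivity of the denominator.} Rather than prove this separately, I would verify it from the identity behind \eqref{eq:selectivity_score}. The denominator equals $\rho(1-\rho)/\varepsilon$, with $\rho=\varepsilon r_1+(1-\varepsilon)r_2\in(0,1)$, so it is positive throughout the domain. I expect this algebraic identification to be the only real computation in the proof.
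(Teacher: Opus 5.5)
Your proposal is correct and is essentially the paper's argument. The conservative rates $\hat r_1,\hat r_2$ are the right-hand sides of Theorem~\ref{thm:global_rate_bounds}, and the monotonicity of Theorem~\ref{thm:monotonicity} turns $r_1\ge\hat r_1$, $r_2\le\hat r_2$ into $S_{\mathrm{GCUL}}(r_1,r_2)\ge S_{\mathrm{GCUL}}(\hat r_1,\hat r_2)=\lambda^*>\lambda$, just as the paper does.

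Your coordinate-wise path, the identification of the denominator as $\rho(1-\rho)/\varepsilon$, and the extra hypothesis $\hat r_2<\hat r_1$ supply a domain check the paper omits. That hypothesis is genuinely needed: for $\varepsilon>1/2$ one can have $\hat r_2>1$, hence $\varepsilon\hat r_1+(1-\varepsilon)\hat r_2>1$, a negative denominator and a spuriously large $\lambda^*$. The one loose end is that the true rates may sit at $r_1=1$ or $r_2=0$, which lie outside the open domain of Theorem~\ref{thm:monotonicity}. This is handled by continuity of $S_{\mathrm{GCUL}}$ at the endpoints, or by the paper's explicit derivative formulas, which remain strictly signed there.
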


\subsection{Rigorous capture rate bounds via sub-Gaussian concentration}
\label{subsec:rigorous_r1_analysis}

To address potential structural instability when misclassified samples are sparsely dispersed across $\mathbb{R}^{d_0}$, we establish a rigorous class-wise lower bound for $r_1$ using sub-Gaussian concentration inequalities \citep{vershynin2018high} and Boole's inequality, without relying on heuristic outlier filters.

\textbf{Measure-theoretic error loss cover.} Let $\mathcal{E}_c \coloneqq \{i \in \mathcal{E} : y_i = c\}$ denote class $c$'s error pool ($|\mathcal{E}_c| = N \pi_c \varepsilon_c$). Error samples escaping the control of $\mathcal{C}_{\mathrm{mix}}$ are covered by two mechanisms: boundary truncation loss $\mathcal{E}_{c, \mathrm{boundary}}$ and extreme directional tail outliers $\mathcal{E}_{c, \mathrm{outlier}}$. 

Under sub-Gaussian concentration with variance proxy $\sigma_{\mathrm{mix},c}^2 = \hat{\mathbf{d}}_c^\top \Sigma_{\mathrm{mix}} \hat{\mathbf{d}}_c$, the Chernoff bound for the directional outlier fraction $\eta_{\mathrm{outlier}, c} \coloneqq |\mathcal{E}_{c, \mathrm{outlier}}| / |\mathcal{E}_c|$ yields:
\begin{equation}
\eta_{\mathrm{outlier}, c} \le \exp\left( -\frac{D_c^2}{8\sigma_{\mathrm{mix},c}^2} \right).
\label{eq:chernoff_outlier}
\end{equation}

\begin{theorem}[Rigorous Sub-Gaussian Lower Bound for Global Capture Rate $r_1$]
\label{thm:r1_rigorous}
Applying Boole's inequality over the non-disjoint loss cover $\mathcal{E}_{c, \mathrm{loss}} \subseteq \mathcal{E}_{c, \mathrm{boundary}} \cup \mathcal{E}_{c, \mathrm{outlier}}$, the empirical global capture rate $r_1$ satisfies the conservative lower bound:
{
\scalefont{0.7}
\begin{equation}
r_1 \ge 1 - \sum_{c=1}^C \frac{\pi_c \varepsilon_c}{\varepsilon} \left[ e^{\left( -\frac{D_c^2}{8\sigma_{\mathrm{mix},c}^2} \right)} + \frac{1}{2}\left[1 - \operatorname{erf}\left(\frac{D_c}{2\sqrt{2}\sigma_{\mathrm{mix},c}}\right)\right] \right]
\label{eq:r1_rigorous_bound}
\end{equation}
}
\end{theorem}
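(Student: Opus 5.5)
The plan is to write $1 - r_1$ as a prior-weighted average of class-wise loss fractions, bound each class-wise loss fraction by a union bound over the cover $\mathcal{E}_{c,\mathrm{boundary}} \cup \mathcal{E}_{c,\mathrm{outlier}}$, and then control the two pieces separately. For the boundary piece we use Lemma~\ref{lem:projection_tails}; for the outlier piece we use the Chernoff estimate \eqref{eq:chernoff_outlier}.

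\textbf{Step 1 (decomposition).} The class error pools $\mathcal{E}_c$ partition $\mathcal{E}$, and $|\mathcal{E}_c| = N\pi_c\varepsilon_c$, so $|\mathcal{E}| = N\varepsilon = N\sum_c \pi_c\varepsilon_c$. Write $\mathcal{E}_{c,\mathrm{loss}} \coloneqq \mathcal{E}_c \setminus \mathcal{C}_{\mathrm{mix}}$ for the class-$c$ errors not captured by the confusion cluster. Then
\begin{equation*}
1 - r_1 = \frac{|\mathcal{E}\setminus\mathcal{C}_{\mathrm{mix}}|}{|\mathcal{E}|} = \sum_{c=1}^C \frac{\pi_c\varepsilon_c}{\varepsilon}\,\frac{|\mathcal{E}_{c,\mathrm{loss}}|}{|\mathcal{E}_c|}.
\end{equation*}
Assumptions~\ref{ass:single_attractor} and \ref{ass:initialization_stability} are what allow us to identify $\mathcal{C}_{\mathrm{conf}}$ with the Voronoi cell $\mathcal{C}_{\mathrm{mix}}$ around $\boldsymbol{\mu}_{\mathrm{mix}}$. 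Assumption~\ref{ass:error_pool_approximation} lets us use $\boldsymbol{\mu}_{\mathrm{mix}}$ and $\sigma_{\mathrm{mix},c}$ in place of the empirical quantities.

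\textbf{Step 2 (union bound per class).} By the stated cover $\mathcal{E}_{c,\mathrm{loss}} \subseteq \mathcal{E}_{c,\mathrm{boundary}} \cup \mathcal{E}_{c,\mathrm{outlier}}$, Boole's inequality gives
\begin{equation*}
\frac{|\mathcal{E}_{c,\mathrm{loss}}|}{|\mathcal{E}_c|} \le \frac{|\mathcal{E}_{c,\mathrm{boundary}}|}{|\mathcal{E}_c|} + \eta_{\mathrm{outlier},c}.
\end{equation*}

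\textbf{Step 3 (bounding each piece).} For the boundary fraction, use the 1D Voronoi reduction: a class-$c$ error sample leaks out exactly when $z_c > D_c/2$. Under Assumption~\ref{ass:asymptotic_gaussian_subgaussian} its projection is $\mathcal{N}(0,\sigma_{\mathrm{mix},c}^2)$, so Lemma~\ref{lem:projection_tails} identifies this fraction with $p_c^{\mathrm{loss}} = \tfrac12[1-\operatorname{erf}(D_c/(2\sqrt2\sigma_{\mathrm{mix},c}))]$. The outlier fraction is bounded directly by \eqref{eq:chernoff_outlier}. Substituting both into Step~1 and rearranging yields \eqref{eq:r1_rigorous_bound}. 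The result is strictly weaker than \eqref{eq:r1_bound}, as it should be for a conservative bound, because an extra nonnegative term has been added.

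\textbf{Main obstacle.} The hard step is the passage from empirical fractions to population probabilities. The lemma gives the probability $p_c^{\mathrm{loss}}$, but $r_1$ is an empirical ratio over the finite set $\mathcal{E}_c$. My first attempt would be to read the boundary term in expectation, or as a consistent estimate via Assumption~\ref{ass:error_pool_approximation}. If a finite-sample statement is wanted, I would instead enlarge the outlier term by a Hoeffding deviation of order $\sqrt{\log(C/\delta)/|\mathcal{E}_c|}$.

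A second concern is the multi-class Voronoi reduction. Pairwise halfspace rules only describe leakage toward class $c$; to make this rigorous, $\mathcal{E}_{c,\mathrm{loss}}$ must be read as leakage across the boundary facing $c$, with leakage toward other classes absorbed by $\mathcal{E}_{c,\mathrm{outlier}}$. This is precisely the role of the non-disjoint cover.
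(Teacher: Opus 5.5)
Your argument is essentially the paper's proof: write $r_1=\sum_c w_c(1-|\mathcal{E}_{c,\mathrm{loss}}|/|\mathcal{E}_c|)$ with $w_c=\pi_c\varepsilon_c/\varepsilon$, apply Boole's inequality to the cover, bound the two fractions by $p_c^{\mathrm{loss}}$ and by \eqref{eq:chernoff_outlier}, and substitute; the paper simply asserts the empirical-versus-population step you flag, without further justification. One caution on your closing remark: the paper defines $\mathcal{E}_{c,\mathrm{outlier}}$ by the same directional event $Z_c>D_c/2$ along $\hat{\mathbf{d}}_c$ as the boundary loss, so the Chernoff term cannot absorb leakage across the Voronoi facets facing other classes.
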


\begin{corollary}[Structural Confidence and Dense Convergence]
\label{cor:attractor_properties}
The non-outlier error fraction under structural attractor control, defined as Structural Existence Confidence $\mathcal{S}_{\mathrm{exist}} \coloneqq 1 - \sum_{c=1}^C \frac{\pi_c \varepsilon_c}{\varepsilon} \eta_{\mathrm{outlier},c}$, satisfies $\mathcal{S}_{\mathrm{exist}} \ge 1 - \sum_{c=1}^C \frac{\pi_c \varepsilon_c}{\varepsilon} \exp\left( -\frac{D_c^2}{8\sigma_{\mathrm{mix},c}^2} \right)$. In dense regimes where $\max_c (\sigma_{\mathrm{mix},c}/D_c) \to 0$, $\mathcal{S}_{\mathrm{exist}} \to 1$, naturally recovering the projection concentration bound in Theorem~\ref{thm:global_rate_bounds}.
\end{corollary}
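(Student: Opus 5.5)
The corollary follows almost directly from the Chernoff outlier bound \eqref{eq:chernoff_outlier}, combined with the weighting structure already used in Theorem~\ref{thm:r1_rigorous}. I would argue in two steps: first the lower bound on $\mathcal{S}_{\mathrm{exist}}$, then the limit statement.

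\textbf{Lower bound.} For each class $c$, I invoke \eqref{eq:chernoff_outlier}, which gives $\eta_{\mathrm{outlier},c} \le \exp(-D_c^2/(8\sigma_{\mathrm{mix},c}^2))$. The weights $\pi_c\varepsilon_c/\varepsilon$ are nonnegative, and since $|\mathcal{E}_c| = N\pi_c\varepsilon_c$ they sum to one: $\sum_c \pi_c\varepsilon_c = \varepsilon$. Multiplying each class inequality by its weight and summing preserves the inequality. Subtracting the result from $1$ reverses the direction, which yields exactly the stated bound on $\mathcal{S}_{\mathrm{exist}}$. The normalization $\sum_c \pi_c\varepsilon_c/\varepsilon = 1$ also shows that $\mathcal{S}_{\mathrm{exist}}$ is a convex combination of the class-wise non-outlier fractions $1-\eta_{\mathrm{outlier},c} \in [0,1]$, so $\mathcal{S}_{\mathrm{exist}} \le 1$.

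\textbf{Limit.} Let $t \coloneqq \max_c \sigma_{\mathrm{mix},c}/D_c$. Each exponent satisfies $D_c^2/(8\sigma_{\mathrm{mix},c}^2) \ge 1/(8t^2)$, so the weighted sum is at most $\exp(-1/(8t^2))$, again using that the weights sum to one. As $t \to 0$ this bound tends to $0$. Together with $\mathcal{S}_{\mathrm{exist}} \le 1$, squeezing gives $\mathcal{S}_{\mathrm{exist}} \to 1$.

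\textbf{Recovery of Theorem~\ref{thm:global_rate_bounds}.} In the bound \eqref{eq:r1_rigorous_bound}, the exponential outlier terms are exactly $1-\mathcal{S}_{\mathrm{exist}}$ (through its upper bound), and they vanish in the limit. What remains is the erf tail term, which is precisely \eqref{eq:r1_bound}. The erf terms vanish as well, but more slowly in relative terms. I would therefore phrase the recovery as follows: the difference between the two lower bounds equals the outlier sum, which tends to $0$.

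\textbf{Main obstacle.} The weak point is that \eqref{eq:chernoff_outlier} is an empirical-fraction statement that is really a probability bound, so it holds only in expectation or with high probability. I would simply take it as given, as stated earlier in the paper. The limit argument also relies on the uniform-in-$c$ control provided by the max, which is where the definition of $t$ is essential.
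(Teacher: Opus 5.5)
Your proof is correct and follows the paper's route: a termwise Chernoff bound \eqref{eq:chernoff_outlier}, weights $\pi_c\varepsilon_c/\varepsilon$ summing to one, and diverging exponents; your squeeze via $\mathcal{S}_{\mathrm{exist}}\le 1$ and the uniform bound $\exp(-1/(8t^2))$ is more careful than the paper, which writes $\eta_{\mathrm{outlier},c} = \exp(-D_c^2/(8\sigma_{\mathrm{mix},c}^2))$ as though the Chernoff bound were an equality. One aside is backwards: with $x = D_c/(2\sigma_{\mathrm{mix},c})$, Mills' ratio gives $\tfrac{1}{2}[1-\operatorname{erf}(x/\sqrt{2})] \le e^{-x^2/2}/(x\sqrt{2\pi})$, so the erf tail vanishes faster than the Chernoff term, not more slowly, and the recovery of Theorem~\ref{thm:global_rate_bounds} therefore holds only in your absolute-difference sense (both bounds tend to $1$), not in relative terms.
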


Theorem~\ref{thm:r1_rigorous} provides an explicit safety margin against centroid collapse: when dispersion diverges ($\sigma_{\mathrm{mix},c} \to \infty$), the exponential term smoothly bounds the capture rate without unphysical measure collapse. Complete proofs are in Appendix~\ref{app:subgaussian_proofs}.

\vspace{-10pt}

% ==============================================================================
% Section 4: Empirical Evaluation
% ==============================================================================
\section{Empirical evaluation}
\label{sec:experiments}

\vspace{-5pt}

\subsection{Benchmark performance on real-world datasets}
\label{subsec:real_benchmarks}

To evaluate the empirical effectiveness of the proposed \mbox{GCUL} framework under a concentrated confusion attractor topology, we utilize a public multi-class emotion classification benchmark from Kaggle~\footnote{\url{https://www.kaggle.com/datasets/prajwalnayakat/text-emotion}}. The dataset comprises $106{,}355$ short text instances spanning 11 emotion categories with a balanced class distribution. Input texts are predominantly short-to-medium sequences ($\le 300$ characters), providing concise local emotional cues. Full dataset statistics, and class-wise decompositions are provided in Appendix~\ref{app:benchmarks_setup}.

The dataset is partitioned into training, validation, and test sets using an 80/10/10 ratio. We adopt accuracy and macro F1-score as standard metrics. For \mbox{GCUL} and its dimension-reduction variants, we additionally report the \textit{Uncertainty Rate} (i.e., the proportion of samples routed to the rejected/uncertain group) to evaluate selection efficiency.

Table~\ref{tab:results_1} presents the comparative performance between standard classification baselines (traditional machine learning and fine-tuned neural encoders) and our proposed \mbox{GCUL} framework (BiLSTM + Attention as benchmark ($f_1$) ). 
\vspace{-5pt}
\begin{table}[H]
\centering
\caption{Overall performance comparison of all models.}
\label{tab:results_1}
\scalebox{0.8}{
\begin{tabular}{l@{\hspace{0.1cm}}c@{\hspace{0.2cm}}c@{\hspace{0.2cm}}c}
\toprule
\textbf{Model} & \textbf{Accuracy (\%)} & \textbf{F1-score} & \textbf{Rej. Rate} \\
\midrule
Logistic Regression & 84.02 & 0.8433 & -- \\
Naive Bayes\cite{lewis1998naive}         & 72.80 & 0.7256 & -- \\
SVM\cite{joachims1998text}                 & 84.11 & 0.8421 & -- \\
Random Forest       & 84.75 & 0.8518 & -- \\
TextCNN\cite{kim2014convolutional}             & 86.35 & 0.8673 & -- \\
BiLSTM + Attention\cite{liu2019bidirectional,graves2005framewise}  & 87.83 & 0.8831 & -- \\
DistilBERT\cite{sanh2019distilbert}          & 89.37 & 0.8958 & -- \\
GCUL + PCA            & \textbf{95.85} & \textbf{0.9594} & 10.24\% \\
GCUL + LDA\cite{fisher1936use}            & 94.98 & 0.9504 & \textbf{8.49\%} \\
\bottomrule
\end{tabular}
}
\end{table}
\vspace{-15pt}
\paragraph*{Main findings:} 
1) \textbf{Substantial Accuracy Boost via Targeted Rejection:} The base classification backbone, DistilBERT ($f_1$), achieves an accuracy of $89.37\%$. By introducing the guided clustering rejection mechanism, \mbox{GCUL-LDA} elevates the selective accuracy on accepted samples to \textbf{$94.98\%$} (a net improvement of \textbf{$+5.61\%$}) while rejecting only $8.49\%$ highly ambiguous instances.
2) \textbf{Superiority of Supervised Dimensionality Reduction:} While \mbox{GCUL-PCA} achieves a marginal accuracy gain ($95.85\%$), \mbox{GCUL-LDA} effectively reduces the uncertainty rate from $10.24\%$ to $8.49\%$ (a $1.75\%$ absolute reduction in over-rejection). This validates that supervised projection via LDA better preserves class separability and minimizes intra-class variance in the embedding space before cluster routing.

Implementation details and training costs for all baselines are elaborated in Appendix~\ref{app:benchmarks_setup}.

\vspace{-10pt}
\subsection{Comparison with selective classification baselines}
\label{subsec:selective_exp}
\vspace{-5pt}
In this section, we compare GCUL with representative selective classification baselines to evaluate both its operational stability and performance across different dataset dynamics. 
Specifically, we consider three established mechanisms: 
(i) \textbf{MSP}~\cite{hendrycks2017baseline} paired with cost-dependent thresholding~\cite{chow1970optimum, geifman2017selective}, 
(ii) \textbf{SelectiveNet}~\cite{geifman2019selectivenet}, and 
(iii) \textbf{Mahalanobis-distance-based} rejection~\cite{lee2018simple}.
\vspace{-10pt}
\paragraph{Operating-point sensitivity of baselines vs. GCUL.}
We first analyze how selective classification performance varies with user-defined operating parameters. 
For MSP, we vary the rejection cost $\lambda$ to dynamically adapt the risk-minimizing confidence threshold. 
\vspace{-10pt}
\begin{figure}[H]
    \centering
    \includegraphics[width=0.36\textwidth]{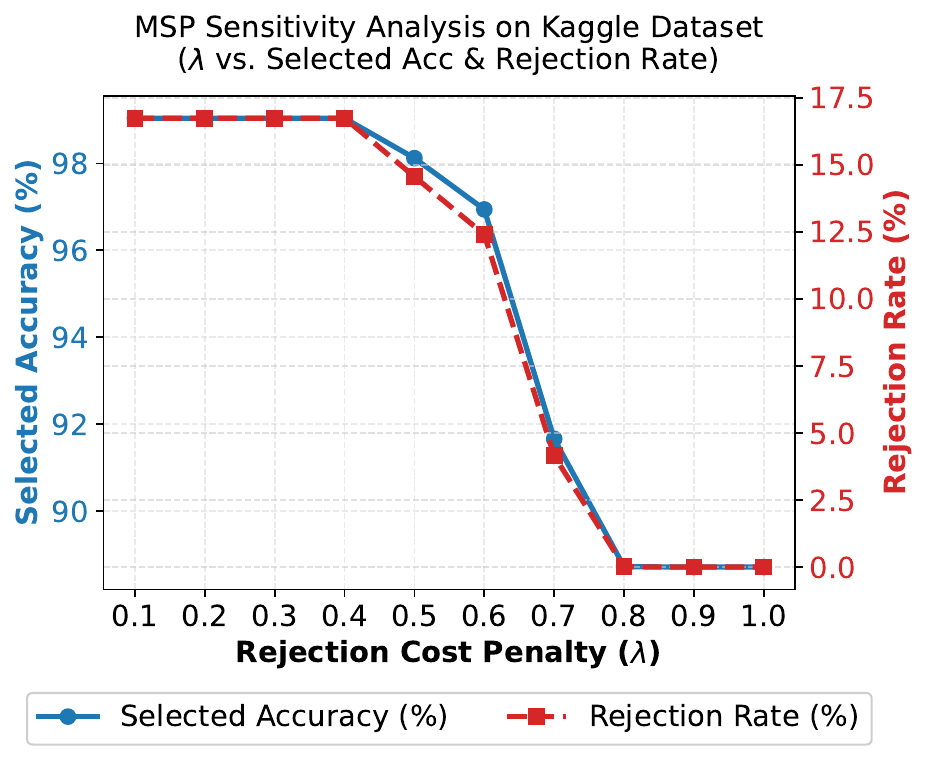}
    \includegraphics[width=0.36\textwidth]{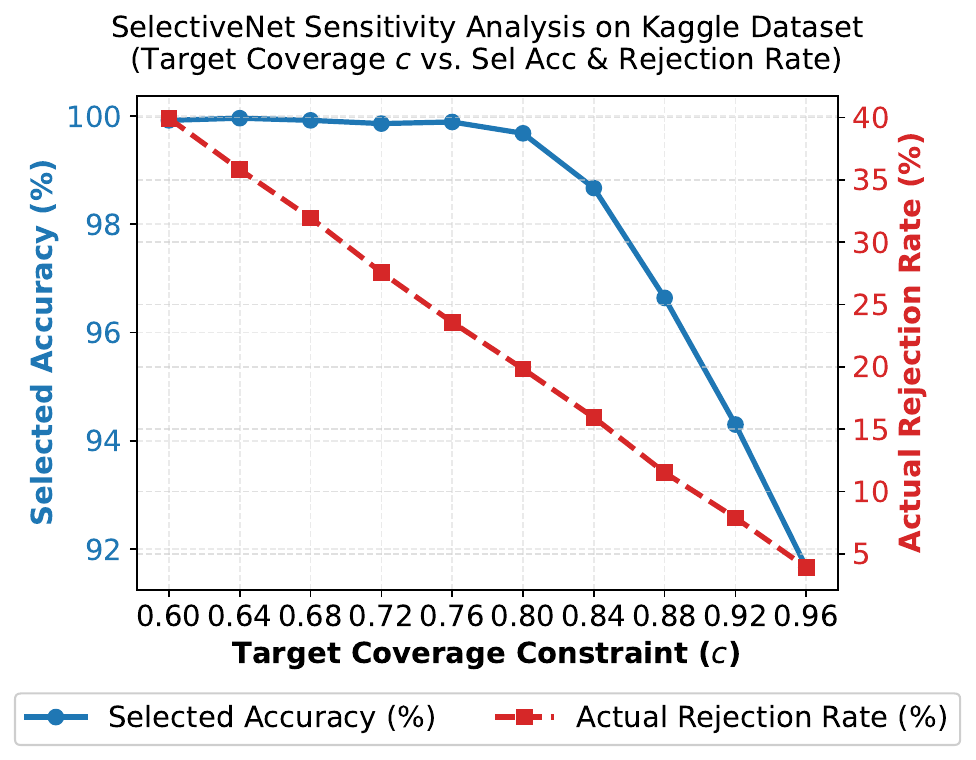}
    \includegraphics[width=0.36\textwidth]{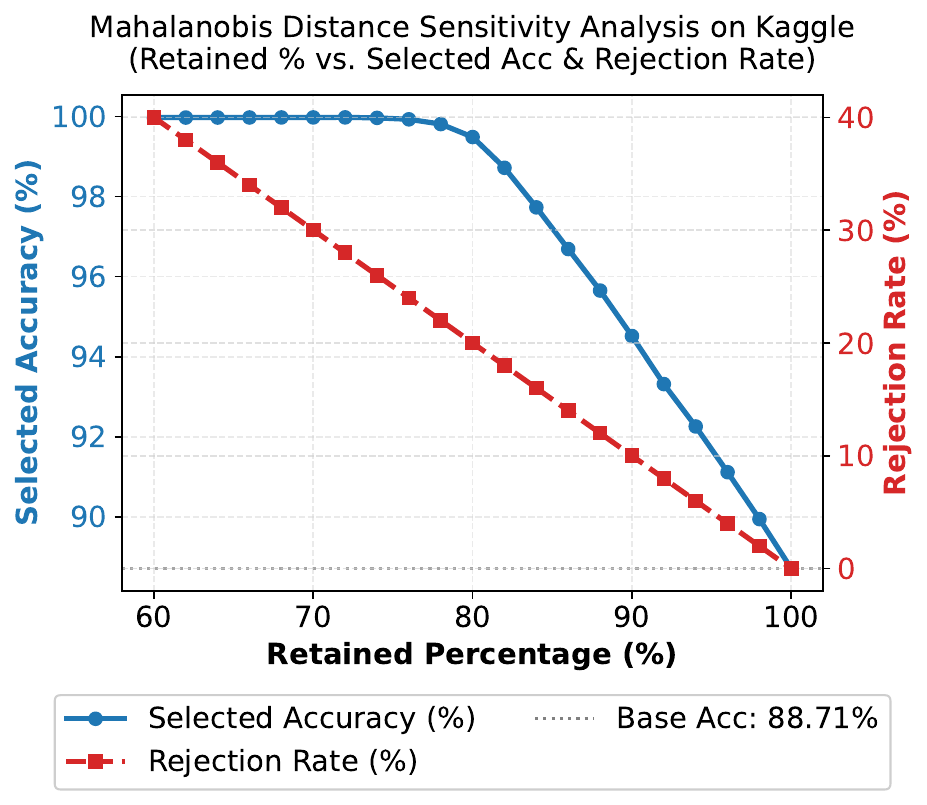}
    \caption{\textbf{Operating-point sensitivity of selective classification baselines on the Kaggle dataset.}
    Sensitivity of MSP to rejection cost $\lambda$, SelectiveNet to target coverage $c$, and Mahalanobis rejection to the retained percentage.}
    \label{fig:baseline_operating_sensitivity}
\end{figure}

\vspace{-10pt}
For SelectiveNet and Mahalanobis rejection, operating points are explicitly controlled via target coverage $c$ and retained percentage, respectively.

As illustrated in Figures~\ref{fig:baseline_operating_sensitivity}(a)--(c), traditional baselines provide flexible control over the prediction--rejection trade-off, but their behavior is highly sensitive to and rigidly parameterized by explicit external target quantities (e.g., coverage or rejection cost). 
In contrast, GCUL achieves competitive selective performance without requiring a manually prescribed rejection fraction. Instead, it intrinsically determines the rejection boundary from the geometric structure of the representation space via cluster-guided uncertainty identification, yielding superior operational stability across reasonable parameter regimes.

\paragraph{Sensitivity of GCUL to representation dimensionality.}
Since GCUL relies on representation geometry, we next investigate its operational stability under varying representation dimensionalities. 
Specifically, we apply PCA to the BERT embeddings and evaluate GCUL across a wide spectrum of dimensions.

\begin{figure}[H]
    \centering
    \includegraphics[width=0.48\textwidth]{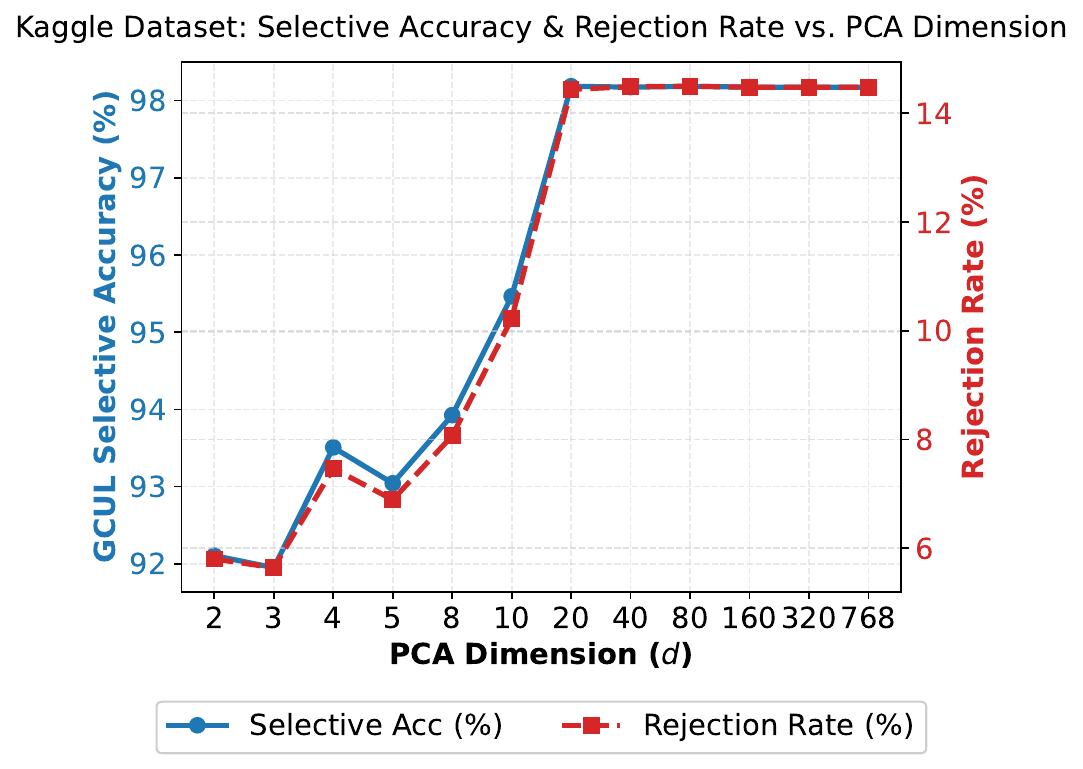}
    \caption{\textbf{Sensitivity of GCUL to representation dimensionality on the Kaggle dataset.}}
    \label{fig:gcul_pca_sweep_kaggle_acc_rej}
\end{figure}

As shown in Figure~\ref{fig:gcul_pca_sweep_kaggle_acc_rej}, GCUL exhibits higher variance in extremely low-dimensional regimes where critical geometric structure is lost. 
However, as the dimensionality increases, its selective accuracy and rejection rate stabilize over a broad range. 
This confirms that while GCUL replaces explicit target coverage tuning with geometric clustering, it remains robust as long as the representation space retains sufficient capacity to preserve local boundary structures.

\paragraph{Validation of the theoretical feasibility boundary.}
To understand whether GCUL's theoretical guarantees align with this geometric stability, we compare the predicted critical rejection cost $\lambda^*_{\mathrm{PRE}}$ against the empirical utility threshold $\lambda^*_{\mathrm{EMP}}$ across PCA dimensions.

Figure~\ref{fig:gcul_pca_sweep_kaggle_lambda} demonstrates that $\lambda^*_{\mathrm{PRE}}$ tracks the trend of the empirical feasibility region while consistently acting as a conservative lower bound (due to the use of probabilistic upper bounds in our theoretical derivations). 
Importantly, this theoretical criterion can be evaluated \textit{a priori} before deploying the full pipeline, providing a practical decision tool to verify whether a given representation space is suitable for GCUL rejection.

\begin{figure}[H]
    \centering
    \includegraphics[width=0.45\textwidth]{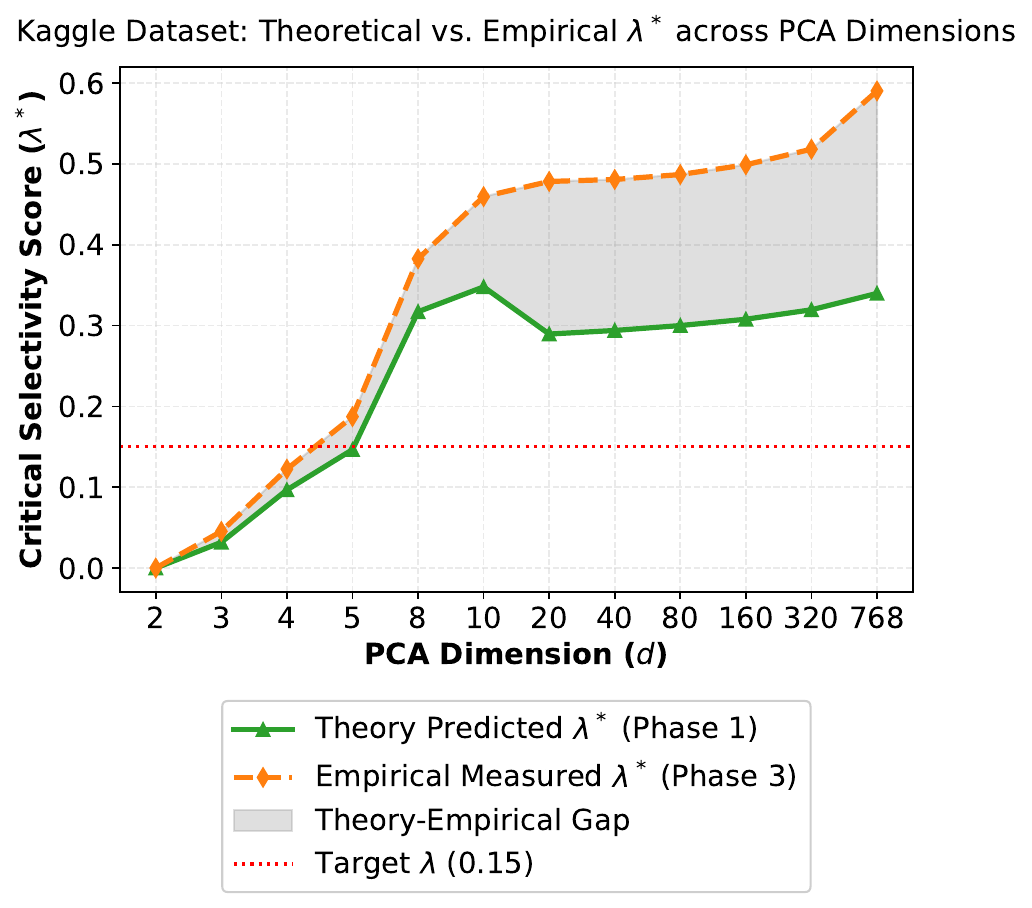}
    \caption{\textbf{Theoretical and empirical critical rejection costs across representation dimensions.}}
    \label{fig:gcul_pca_sweep_kaggle_lambda}
\end{figure}

\paragraph{Cross-dataset robustness and failure case screening.}
We further evaluate all selective classification methods across three datasets (Kaggle, GoEmotions\cite{demszky2020goemotions}, Dair-AI~\footnote{\url{https://huggingface.co/dair-ai/datasets}} \cite{saravia2018carer}) under multiple PCA dimensions and rejection-cost settings. We report the representative qualitative patterns in the main text, while providing the complete experimental results in Appendix~\ref{app:comparison_result}.

The three datasets exhibit substantially different behaviors. On the Kaggle
dataset, GCUL demonstrates a relatively stable utility--rejection trade-off
across different PCA dimensions, with its rejection region determined by the
underlying embedding geometry rather than a manually specified rejection
fraction. On the other hand, the GoEmotions dataset represents a challenging
failure case for selective classification. In this setting, none of the
evaluated methods provides a meaningful utility improvement. In particular,
the MSP-based method exhibits an almost degenerate behavior: for
$\lambda < 0.6$, it rejects nearly all samples, whereas for
$\lambda > 0.6$, it rejects almost none. The other baseline methods similarly
follow their predefined selection mechanisms without producing a positive
utility gain. GCUL also fails to improve the overall utility in this setting;
however, its theoretical feasibility analysis correctly identifies the
absence of a feasible operating regime, yielding
$\lambda^*_{\mathrm{PRE}} = 0$. This indicates that the theoretical analysis
can serve as a useful pre-deployment screening mechanism, allowing
practitioners to identify potentially unsuitable datasets without requiring
a complete selective-classification training and evaluation pipeline.

\subsection{Synthetic phase-transition verification}
\label{subsec:synthetic_exp}
To evaluate whether the theoretical feasibility criterion captures the degradation of GCUL under controlled perturbations, we conduct a synthetic experiment in which the geometry of a GCUL-favorable dataset is progressively corrupted. The purpose of this experiment is not to reproduce a particular real-world data distribution, but to provide a controlled environment for examining the relationship between the predicted critical rejection cost ($\lambda^*_{\mathrm{PRE}}$) and its empirical counterpart ($\lambda^*_{\mathrm{EMP}}$).

\paragraph{Synthetic setup and compound noise injection.} 
We construct a 10-dimensional feature space containing five clean class populations  (2,000 samples per class) and an additional 2,000-sample localized confusion population. In the zero-noise regime, the baseline classifier achieves approximately 86.5\% accuracy, while GCUL achieves 100\% selective accuracy with a refusal rate of approximately 16.5\%

We introduce two forms of corruption simultaneously: isotropic Gaussian feature noise and random label corruption,
\[
\sigma_{\mathrm{noise}} = \eta \\ p_{\mathrm{flip}} = 0.025\eta
\]
where $\eta$ is varied from 0 to 16 over 41 levels. For each noise level, the experiment is repeated with 10 random seeds, and the reported trajectories are averaged across seeds. Single run result was provided in Appendix~\ref{app:synthetic_result}

\begin{figure}[H]
    \centering
    \includegraphics[width=0.48\textwidth]{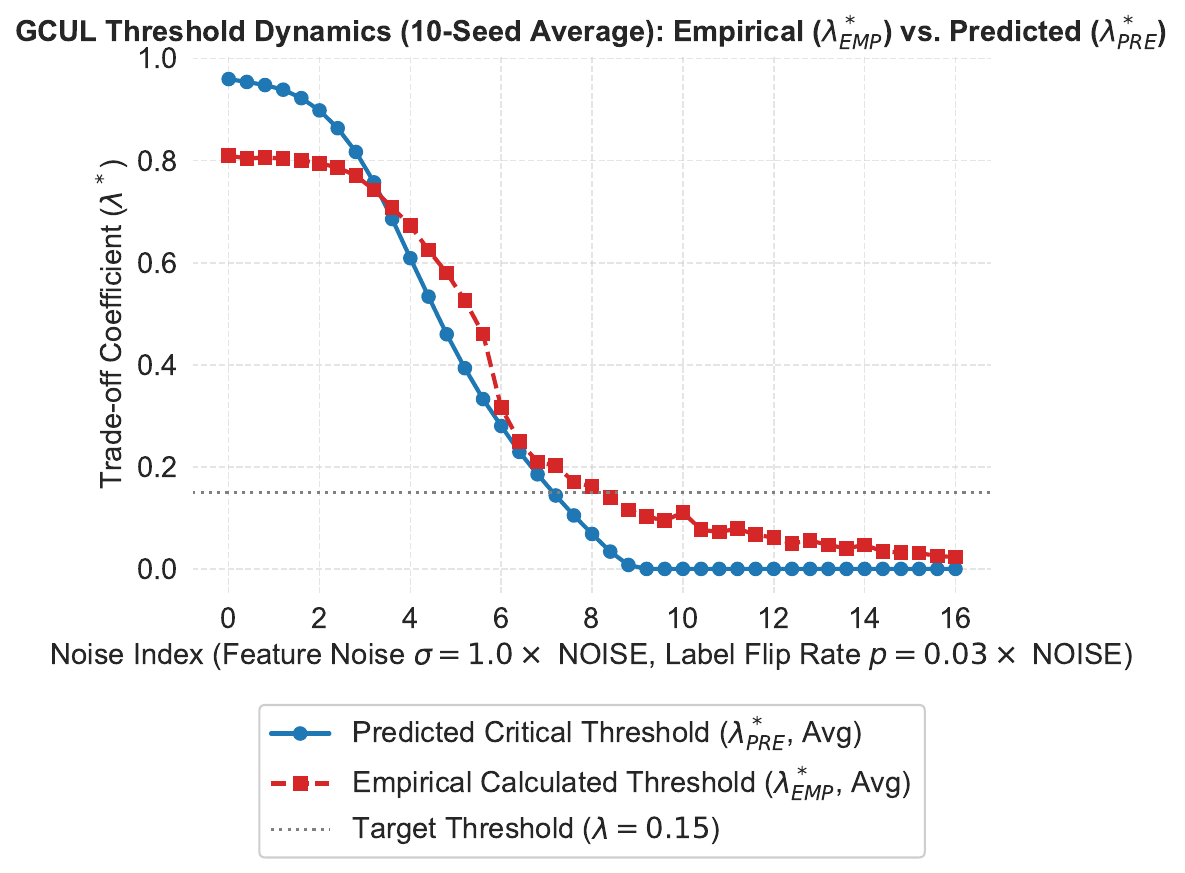}
    \caption{\textbf{Dynamics of critical trade-off thresholds ($\lambda^*$) across synthetic noise regimes (10-seed average).}}
    \label{fig:synthetic_dynamics}
\end{figure}

\paragraph{Phase-transition dynamics and threshold alignment.}
Figure~\ref{fig:synthetic_dynamics} illustrates the dynamic trajectories of critical trade-off thresholds as a function of the Noise Index. Key analytical observations include:

\begin{enumerate}
    \item \textbf{Tight bound alignment:} Across the entire noise sweep, the theoretical prediction $\lambda^*_{\mathrm{PRE}}$ closely tracks the empirical threshold $\lambda^*_{\mathrm{EMP}}$. Particularly during the critical decay phase ($\mathrm{Noise} \in [3.0, 7.0]$), both curves exhibit a near-identical monotonic decrease, validating that our derived sufficient condition provides a tight, non-loose theoretical bound for selective utility.
    
    \item \textbf{Asymptotic decay under heavy corruption:} As noise exceeds $8.5$, $\lambda^*_{\mathrm{PRE}}$ asymptotes to $0.0$, indicating that isotropic noise has completely polluted the cluster geometry, rendering selective rejection mathematically unviable. The empirical curve $\lambda^*_{\mathrm{EMP}}$ retains a minor residual floor due to discrete sample partitioning, but offers no effective utility gain over the baseline.
\end{enumerate}

\paragraph{Reliability and safety envelope analysis.}
While averaging across $10$ seeds yields smooth macro-trajectories, individual empirical runs ($\lambda^*_{\mathrm{EMP}}$) naturally exhibit local variance due to stochastic data sampling. To evaluate the mathematical integrity and safety guarantees of our theoretical feasibility conditions, we analyze micro-level predictions across all $410$ individual evaluation runs ($41$ noise steps $\times$ $10$ random seeds). We categorize discrepancies into two violation modes:
\begin{itemize}
    \item \textbf{Type-1 Violation (Conservative Error):} Theory predicts \textit{Infeasible} ($\lambda^*_{\mathrm{PRE}} < \lambda$), but empirical execution yields \textit{Effective} ($\lambda^*_{\mathrm{EMP}} \ge \lambda$).
    \item \textbf{Type-2 Violation (Fatal Risk):} Theory predicts \textit{Feasible} ($\lambda^*_{\mathrm{PRE}} \ge \lambda$), but empirical execution \textit{Fails} ($\lambda^*_{\mathrm{EMP}} < \lambda$).
\end{itemize}

\begin{table}[H]
\centering
\small
\caption{\textbf{Quantitative reliability analysis of theoretical feasibility predictions across 410 individual runs (41 noise levels $\times$ 10 random seeds).}}
\label{tab:violation_analysis}
\begin{tabular}{lll}
\toprule
\textbf{Violation Type} & \textbf{Count / Total} & \textbf{Rate (\%)} \\
\midrule
\textbf{Type-1} & 25 / 410 & 6.1\% \\
\textbf{Type-2} & \textbf{0 / 410} & \textbf{0\%} \\
\midrule
\textbf{Overall} & 25 / 410 & 6.1\% \\
\bottomrule
\end{tabular}
\end{table}

Across all 410 runs, no Type-2 violation is observed, whereas 25 runs exhibit Type-1 violations. This asymmetric error pattern is consistent with the intended role of the condition as a sufficient rather than necessary criterion: when the condition is satisfied, we observe no empirical failure in this controlled experiment, while failure of the condition does not necessarily imply that GCUL is ineffective.

% ==============================================================================
% Section 5: Conclusion and Discussion
% ==============================================================================

\section{Conclusion and discussion}
\label{sec:conclusion}

This work introduces Guided Clustering-based Uncertain Learning (GCUL), a geometry-driven framework for selective classification that identifies confusion regions directly from representation spaces. By establishing a theoretical feasibility criterion based on high-dimensional measure concentration, GCUL enables pre-deployment screening to assess applicability before executing full operating-point searches. Our empirical evaluations confirm that GCUL substantially improves selective performance on datasets exhibiting coherent confusion structures, highlighting its role as a geometry-dependent selective learning framework rather than a universal baseline.

In addition to its predictive performance, GCUL offers distinct practical advantages for deployment. It operates as an architectural wrapper around standard classifiers without altering backbone designs, allowing Phase~1 models and representations to be fully reused. Furthermore, across various classification backbones, GCUL exhibits remarkably consistent qualitative behavior, suggesting that localized hard-example clustering reflects intrinsic task-distribution geometry rather than model-specific artifacts.

Despite these strengths, several limitations remain. The performance of GCUL depends on the choice of representation dimensionality, where overly aggressive reduction risks losing spatial information while excessively high dimensions yield conservative clustering. Beyond that, Although GCUL is formulated as a general framework for multi-class classification and does not intrinsically depend on a particular data modality, the current empirical evaluation is restricted to text-based emotion classification. Additionally, our theoretical guarantees rest on structural assumptions, such as a dominant confusion attractor, which may be violated when unconfident samples form multiple disconnected clusters or exhibit severe non-Gaussianity.

Future research will focus on addressing these bounds. Key directions include developing automated dimension selection rules, extending the single-attractor formulation to multimodal or hierarchical confusion clusters, and evaluating GCUL across computer vision and multimodal domains to verify the broader generalizability of representation geometry in selective learning.

% ==============================================================================
% References
% ==============================================================================
\bibliographystyle{plain}
\bibliography{ref}

@article{diaconis1984asymptotics,
  title={Asymptotics of graphical projection pursuit},
  author={Diaconis, Persi and Freedman, David},
  journal={The Annals of Statistics},
  volume={12},
  number={3},
  pages={793--815},
  year={1984},
  publisher={Institute of Mathematical Statistics},
  doi={10.1214/aos/1176346703}
}

@article{klartag2007central,
  title={A central limit theorem for convex sets},
  author={Klartag, Bo'az},
  journal={Inventiones Mathematicae},
  volume={168},
  number={1},
  pages={91--131},
  year={2007},
  publisher={Springer},
  doi={10.1007/s00222-006-0028-8}
}

@article{chow1970optimum,
  title={On optimum recognition error and reject tradeoff},
  author={Chow, C. K.},
  journal={IEEE Transactions on Information Theory},
  volume={16},
  number={1},
  pages={41--46},
  year={1970},
  doi={10.1109/TIT.1970.1054406}
}

@article{elyaniv2010foundations,
  author  = {Ran El-Yaniv and Yair Wiener},
  title   = {On the Foundations of Noise-free Selective Classification},
  journal = {Journal of Machine Learning Research},
  year    = {2010},
  volume  = {11},
  number  = {53},
  pages   = {1605-1641},
  url     = {http://jmlr.org/papers/v11/el-yaniv10a.html}
}

@book{vershynin2018high,
  title={High-Dimensional Probability: An Introduction with Applications in Data Science},
  author={Vershynin, Roman},
  series={Cambridge Series in Statistical and Probabilistic Mathematics},
  volume={47},
  publisher={Cambridge University Press},
  year={2018},
  doi={10.1017/9781108231596}
}

@inproceedings{hendrycks2017baseline,
  title     = {A Baseline for Detecting Misclassified and Out-of-Distribution Examples in Neural Networks},
  author    = {Hendrycks, Dan and Gimpel, Kevin},
  booktitle = {International Conference on Learning Representations},
  year      = {2017},
  url       = {https://openreview.net/forum?id=Hkg4TI9xl}
}

@inproceedings{guo2017calibration,
  title={On calibration of modern neural networks},
  author={Guo, Chuan and Pleiss, Geoff and Sun, Yu and Weinberger, Kilian Q.},
  booktitle={Proceedings of the 34th International Conference on Machine Learning},
  volume={70},
  pages={1321--1330},
  year={2017},
  publisher={JMLR.org},
  doi={10.5555/3305381.3305518}
}

@inproceedings{jiang2018trustscore,
  title={To trust or not to trust a classifier},
  author={Jiang, Heinrich and Kim, Been and Guan, Melody Y. and Gupta, Maya},
  booktitle={Proceedings of the 32nd International Conference on Neural Information Processing Systems},
  pages={5546--5557},
  year={2018},
  publisher={Curran Associates Inc.},
  doi={10.5555/3327345.3327458}
}

@inproceedings{shrivastava2016ohem,
  title={Training region-based object detectors with online hard example mining},
  author={Shrivastava, Abhinav and Gupta, Abhinav and Girshick, Ross},
  booktitle={Proceedings of the IEEE Conference on Computer Vision and Pattern Recognition},
  pages={761--769},
  year={2016},
  doi={10.1109/CVPR.2016.89}
}

@inproceedings{lin2017focalloss,
  title     = {Focal loss for dense object detection},
  author    = {Lin, Tsung-Yi and Goyal, Priya and Girshick, Ross and He, Kaiming and Doll{\'a}r, Piotr},
  booktitle = {Proceedings of the IEEE International Conference on Computer Vision},
  pages     = {2980--2988},
  year      = {2017},
  doi       = {10.1109/ICCV.2017.324}
}

@inproceedings{swayamdipta2020cartography,
  title     = {Dataset cartography: Mapping and diagnosing datasets with training dynamics},
  author    = {Swayamdipta, Swabha and Schwartz, Roy and Lourie, Nicholas and Wang, Yizhong and Hajishirzi, Hannaneh and Smith, Noah A. and Choi, Yejin},
  booktitle = {Proceedings of the 2020 Conference on Empirical Methods in Natural Language Processing},
  pages     = {9275--9293},
  year      = {2020},
  publisher = {Association for Computational Linguistics},
  doi       = {10.18653/v1/2020.emnlp-main.746}
}

@inproceedings{robinson2021hardnegative,
  title     = {Contrastive learning with hard negative samples},
  author    = {Robinson, Joshua and Chuang, Ching-Yao and Sra, Suvrit and Jegelka, Stefanie},
  booktitle = {International Conference on Learning Representations},
  year      = {2021},
  url       = {https://openreview.net/forum?id=CR1XOQ0UTh-}
}

@inproceedings{sun2022knn,
  title     = {Out-of-distribution Detection with Deep Nearest Neighbors},
  author    = {Sun, Yiyou and Ming, Yifei and Zhu, Xiaojin and Li, Yixuan},
  booktitle = {Proceedings of the 39th International Conference on Machine Learning},
  volume    = {162},
  pages     = {20827--20840},
  year      = {2022},
  publisher = {PMLR},
  doi       = {10.5555/3546258.3546424},
  url       = {https://proceedings.mlr.press/v162/sun22a.html}
}

@inproceedings{ziyin2019deepgamblers,
  title     = {Deep gamblers: Learning to abstain with portfolio theory},
  author    = {Ziyin, Liu and Wang, Zhikang and Ueda, Masahito},
  booktitle = {Advances in Neural Information Processing Systems},
  volume    = {32},
  pages     = {10623--10633},
  year      = {2019},
  url       = {https://proceedings.neurips.cc/paper/2019/hash/83693e50838152eb9b3a3c94291e1d3b-Abstract.html}
}

@inproceedings{liang2018enhancing,
  title     = {Enhancing the reliability of out-of-distribution image detection in neural networks},
  author    = {Liang, Shiyu and Li, Yixuan and Srikant, R.},
  booktitle = {International Conference on Learning Representations},
  year      = {2018},
  url       = {https://openreview.net/forum?id=H1VGkIxRZ}
}

@inproceedings{saravia2018carer,
  title     = {{CARER}: Contextualized affect representations for emotion recognition},
  author    = {Saravia, Elvis and Liu, Hsien-Chi Toby and Huang, Yen-Hao and Wu, Junlin and Chen, Yi-Shin},
  booktitle = {Proceedings of the 2018 Conference on Empirical Methods in Natural Language Processing},
  pages     = {3685--3695},
  year      = {2018},
  publisher = {Association for Computational Linguistics},
  doi       = {10.18653/v1/D18-1404}
}

@inproceedings{demszky2020goemotions,
  title     = {{G}o{E}motions: A dataset of fine-grained emotions},
  author    = {Demszky, Dorottya and Movshovitz-Attias, Dana and Ko, Jeongwoo and Cowen, Alan and Nemade, Gaurav and Ravi, Sujith},
  booktitle = {Proceedings of the 58th Annual Meeting of the Association for Computational Linguistics},
  pages     = {4040--4054},
  year      = {2020},
  publisher = {Association for Computational Linguistics},
  doi       = {10.18653/v1/2020.acl-main.372}
}

@inproceedings{lee2018simple,
  title     = {A simple unified framework for detecting out-of-distribution samples and adversarial attacks},
  author    = {Lee, Kimin and Lee, Kibok and Lee, Honglak and Shin, Jinwoo},
  booktitle = {Advances in Neural Information Processing Systems},
  volume    = {31},
  pages     = {7167--7177},
  year      = {2018},
  doi       = {10.5555/3327757.3327819}
}

@inproceedings{geifman2019selectivenet,
  title     = {{S}elective{N}et: A deep neural network with an integrated reject option},
  author    = {Geifman, Yonatan and El-Yaniv, Ran},
  booktitle = {Proceedings of the 36th International Conference on Machine Learning},
  volume    = {97},
  pages     = {2151--2159},
  year      = {2019},
  publisher = {PMLR},
  url       = {https://proceedings.mlr.press/v97/geifman19a.html}
}

@inproceedings{geifman2017selective,
  title     = {Selective classification for deep neural networks},
  author    = {Geifman, Yonatan and El-Yaniv, Ran},
  booktitle = {Advances in Neural Information Processing Systems},
  volume    = {30},
  pages     = {4885--4894},
  year      = {2017},
  doi       = {10.5555/3295222.3295241}
}

@inproceedings{devlin2019bert,
  title     = {{BERT}: Pre-training of deep bidirectional transformers for language understanding},
  author    = {Devlin, Jacob and Chang, Ming-Wei and Lee, Kenton and Toutanova, Kristina},
  booktitle = {Proceedings of the 2019 Conference of the North American Chapter of the Association for Computational Linguistics: Human Language Technologies},
  pages     = {4171--4186},
  year      = {2019},
  publisher = {Association for Computational Linguistics},
  doi       = {10.18653/v1/N19-1423}
}

@inproceedings{joachims1998text,
  title     = {Text categorization with support vector machines: Learning with many relevant features},
  author    = {Joachims, Thorsten},
  booktitle = {Proceedings of the European Conference on Machine Learning},
  pages     = {137--142},
  year      = {1998},
  doi       = {10.1007/BFb0026683}
}

@inproceedings{kim2014convolutional,
  title     = {Convolutional neural networks for sentence classification},
  author    = {Kim, Yoon},
  booktitle = {Proceedings of the 2014 Conference on Empirical Methods in Natural Language Processing},
  pages     = {1746--1751},
  year      = {2014},
  publisher = {Association for Computational Linguistics},
  doi       = {10.3115/v1/D14-1181}
}

@unpublished{sanh2019distilbert,
  title  = {{D}istil{BERT}, a distilled version of {BERT}: smaller, faster, cheaper and lighter},
  author = {Sanh, Victor and Debut, Lysandre and Chaumond, Julien and Wolf, Thomas},
  year   = {2019},
  note   = {arXiv:1910.01108},
  doi    = {10.48550/arXiv.1910.01108}
}

@inproceedings{lewis1998naive,
  title     = {Naive ({B}ayes) at forty: The independence assumption in information retrieval},
  author    = {Lewis, David D.},
  booktitle = {Proceedings of the European Conference on Machine Learning},
  pages     = {4--15},
  year      = {1998},
  note      = {Invited talk},
  doi       = {10.1007/BFb0026666}
}

@article{fisher1936use,
  title     = {The use of multiple measurements in taxonomic problems},
  author    = {Fisher, Ronald A.},
  journal   = {Annals of Eugenics},
  volume    = {7},
  number    = {2},
  pages     = {179--188},
  year      = {1936},
  doi       = {10.1111/j.1469-1809.1936.tb02137.x}
}

@inproceedings{graves2005framewise,
  title     = {Framewise phoneme classification with bidirectional {LSTM} networks},
  author    = {Graves, Alex and Schmidhuber, J{\"u}rgen},
  booktitle = {Proceedings of the 2005 IEEE International Joint Conference on Neural Networks},
  volume    = {4},
  pages     = {2047--2052},
  year      = {2005},
  publisher = {IEEE},
  doi       = {10.1109/IJCNN.2005.1556215}
}

@article{liu2019bidirectional,
  title     = {Bidirectional {LSTM} with attention mechanism and convolutional layer for text classification},
  author    = {Liu, Gang and Guo, Jiabao},
  journal   = {Neurocomputing},
  volume    = {337},
  pages     = {325--338},
  year      = {2019},
  doi       = {10.1016/j.neucom.2019.01.078}
}

% ==============================================================================
% Appendix
% ==============================================================================
\newpage
\appendix
\section*{Appendix}
\addcontentsline{toc}{section}{Appendix}

\section{Proofs and derivations for section~\ref{subsec:selectivity_score}}
\label{app:utility_proofs}

\subsection{Derivation of the effectiveness criterion (Eq.~\ref{eq:selectivity_score})}
\label{app:criterion_derivation}

Starting from the positive utility condition $\mathcal{U}(\lambda) > 0$, we have:
\begin{equation}
\varepsilon_{\mathrm{clean}} + \lambda\rho < \varepsilon \implies \frac{\varepsilon(1-r_1)}{1-\rho} + \lambda\rho < \varepsilon.
\end{equation}
Dividing both sides by $\varepsilon > 0$ yields:
\begin{equation}
\frac{1-r_1}{1-\rho} < 1 - \frac{\lambda}{\varepsilon}\rho \implies 1-r_1 < (1-\rho)\left(1 - \frac{\lambda}{\varepsilon}\rho\right).
\end{equation}
Expanding the right-hand side:
{
{\scalefont{0.93}
\begin{equation}
1-r_1 < 1 - \rho - \frac{\lambda}{\varepsilon}\rho + \frac{\lambda}{\varepsilon}\rho^2 \implies -r_1 < -\rho + \frac{\lambda}{\varepsilon}\rho(\rho-1).
\end{equation}
}
Rearranging terms:
\begin{equation}
\rho - r_1 > \frac{\lambda}{\varepsilon}\rho(1-\rho).
\end{equation}
Substituting $\rho = \varepsilon r_1 + (1-\varepsilon)r_2$, the left-hand side simplifies to:
\begin{equation}
\rho - r_1 = (1-\varepsilon)r_2 - (1-\varepsilon)r_1 = (1-\varepsilon)(r_2 - r_1).
\end{equation}
Thus, we obtain:
{
\scalefont{0.88}
\begin{equation}
(1-\varepsilon)(r_1 - r_2) > \frac{\lambda}{\varepsilon}\rho(1-\rho) \implies \lambda < \frac{\varepsilon(1-\varepsilon)(r_1 - r_2)}{\rho(1-\rho)}.
\end{equation}
}
Expanding $\rho(1-\rho)$ explicitly:
{
\scalefont{0.88}
\begin{align}
&\rho(1-\rho) \\&= [\varepsilon r_1 + (1-\varepsilon)r_2][1 - \varepsilon r_1 - (1-\varepsilon)r_2] \nonumber \\
&= \varepsilon r_1 + (1-\varepsilon)r_2 - \varepsilon^2 r_1^2 - (1-\varepsilon)^2 r_2^2 - 2\varepsilon(1-\varepsilon)r_1 r_2.
\end{align}
}
Dividing the numerator and denominator by $\varepsilon$ yields the closed-form expression for $S_{\mathrm{GCUL}}$ in Eq.~\ref{eq:selectivity_score}. \hfill $\square$

\subsection{Proof of Theorem~\ref{thm:monotonicity} (strict monotonicity)}
\label{app:proof_monotonicity}

Let $a = 1-\varepsilon$. We write $S_{\mathrm{GCUL}} = \frac{a(r_1 - r_2)}{D}$, where $D = r_1 + \frac{a}{\varepsilon}r_2 - \varepsilon r_1^2 - \frac{a^2}{\varepsilon}r_2^2 - 2a r_1 r_2$.

\textbf{Part 1: Derivative w.r.t. $r_1$.}
\begin{equation}
\frac{\partial S}{\partial r_1} = \frac{a}{D^2} \left[ D - (r_1 - r_2)\frac{\partial D}{\partial r_1} \right].
\end{equation}
Calculating $\frac{\partial D}{\partial r_1} = 1 - 2\varepsilon r_1 - 2a r_2$ and substituting:
{
\begin{align*}
&D - (r_1 - r_2)\frac{\partial D}{\partial r_1} \\&=\left(r_1 + \frac{a}{\varepsilon}r_2 - \varepsilon r_1^2 - \frac{a^2}{\varepsilon}r_2^2 - 2a r_1 r_2\right) \\ & - (r_1 - r_2)(1 - 2\varepsilon r_1 - 2a r_2) \nonumber \\
&=\varepsilon(r_1 - r_2)^2 + \frac{r_2}{\varepsilon}(1 - r_2).
\end{align*}
}
Since $\varepsilon \in (0,1)$, $D > 0$, and $0 < r_2 < r_1 < 1$, both terms are strictly positive, implying $\frac{\partial S}{\partial r_1} > 0$.

\textbf{Part 2: Derivative w.r.t. $r_2$.}
\begin{equation}
\frac{\partial S}{\partial r_2} = \frac{a}{D^2} \left[ -D - (r_1 - r_2)\frac{\partial D}{\partial r_2} \right].
\end{equation}
Calculating $\frac{\partial D}{\partial r_2} = \frac{a}{\varepsilon} - \frac{2a^2}{\varepsilon}r_2 - 2a r_1$ and simplifying:
{
\scalefont{0.93}
\begin{equation}
-D - (r_1 - r_2)\frac{\partial D}{\partial r_2} = -\left[ \frac{a^2}{\varepsilon}(r_1 - r_2)^2 + \frac{r_1}{\varepsilon}(1 - r_1) \right].
\end{equation}
}
Since all terms within the brackets are non-negative and $r_1 > 0$, we conclude $\frac{\partial S}{\partial r_2} < 0$. \hfill $\square$

\section{Proofs for geometric modeling and rate bounds}
\label{app:geometric_proofs}

\subsection{Proof of 1D Voronoi projection reduction}
\label{app:proof_voronoi}

\begin{proof}
Under $K$-Means clustering, a sample $x \in \mathbb{R}^{d_0}$ is assigned to $\mathcal{C}_{\mathrm{mix}}$ over class center $\boldsymbol{\mu}_c$ if and only if it is strictly closer to $\boldsymbol{\mu}_{\mathrm{mix}}$ in Euclidean distance:
\begin{equation}
\|x - \boldsymbol{\mu}_{\mathrm{mix}}\|_2^2 < \|x - \boldsymbol{\mu}_c\|_2^2.
\end{equation}
Substituting $\boldsymbol{\mu}_c = \boldsymbol{\mu}_{\mathrm{mix}} - \mathbf{d}_c$ into the right-hand side:
\begin{align}
&\|x - \boldsymbol{\mu}_c\|_2^2 = \|(x - \boldsymbol{\mu}_{\mathrm{mix}}) + \mathbf{d}_c\|_2^2 \nonumber \\
&= \|x - \boldsymbol{\mu}_{\mathrm{mix}}\|_2^2 + 2\langle x - \boldsymbol{\mu}_{\mathrm{mix}}, \mathbf{d}_c \rangle + \|\mathbf{d}_c\|_2^2.
\end{align}
Subtracting $\|x - \boldsymbol{\mu}_{\mathrm{mix}}\|_2^2$ from both sides yields:
{\scalefont{0.85}
\begin{equation}
0 < 2\langle x - \boldsymbol{\mu}_{\mathrm{mix}}, \mathbf{d}_c \rangle + D_c^2 \implies 0 < 2 D_c \langle x - \boldsymbol{\mu}_{\mathrm{mix}}, \hat{\mathbf{d}}_c \rangle + D_c^2.
\end{equation}
}
Dividing by $2 D_c > 0$ gives $z_c(x) \coloneqq \langle x - \boldsymbol{\mu}_{\mathrm{mix}}, \hat{\mathbf{d}}_c \rangle > -D_c/2$. Measuring displacement along direction $\hat{\mathbf{d}}_c$ from $\boldsymbol{\mu}_{\mathrm{mix}}$ towards $\boldsymbol{\mu}_c$, the decision boundary simplifies to $z_c = D_c/2$. Thus, $x$ remains in $\mathcal{C}_{\mathrm{mix}}$ if $z_c < D_c/2$.
\end{proof}

\subsection{Proof of Lemma~\ref{lem:projection_tails} (boundary loss and contamination probabilities)}
\label{app:proof_projection_tails}

\begin{proof}
By high-dimensional projection concentration \cite{klartag2007central,diaconis1984asymptotics}, the 1D marginal projection variable $z_c(X_c^{\mathrm{err}}) = \hat{\mathbf{d}}_c^\top (X_c^{\mathrm{err}} - \boldsymbol{\mu}_{\mathrm{mix}})$ for error samples follows $\mathcal{N}(0, \sigma_{\mathrm{mix},c}^2)$. The upper-tail probability beyond $D_c/2$ is:
\begin{align}
p_c^{\mathrm{loss}} &= \int_{D_c/2}^{\infty} \frac{1}{\sqrt{2\pi}\sigma_{\mathrm{mix},c}} \exp\left( -\frac{u^2}{2\sigma_{\mathrm{mix},c}^2} \right) \mathrm{d}u.
\end{align}
Substituting $t = \frac{u}{\sqrt{2}\sigma_{\mathrm{mix},c}}$:
{\scalefont{0.86}
\begin{align*}
p_c^{\mathrm{loss}} &= \frac{1}{\sqrt{\pi}} \int_{\frac{D_c}{2\sqrt{2}\sigma_{\mathrm{mix},c}}}^{\infty} e^{-t^2} \mathrm{d}t = \frac{1}{2} \left[ 1 - \frac{2}{\sqrt{\pi}} \int_{0}^{\frac{D_c}{2\sqrt{2}\sigma_{\mathrm{mix},c}}} e^{-t^2} \mathrm{d}t \right] \nonumber \\
&= \frac{1}{2}\left[1 - \operatorname{erf}\left( \frac{D_c}{2\sqrt{2}\sigma_{\mathrm{mix},c}} \right)\right].
\end{align*}
}
Symmetrically, for a clean sample $X_c^{\mathrm{clean}} \sim \mathcal{N}(\boldsymbol{\mu}_c, \Sigma_c)$, its relative projection along $\hat{\mathbf{d}}_c$ centered at $\boldsymbol{\mu}_c$ follows $\mathcal{N}(0, \sigma_c^2)$. The lower-tail penetration probability into $\mathcal{C}_{\mathrm{mix}}$ (where $\langle X_c^{\mathrm{clean}} - \boldsymbol{\mu}_c, \hat{\mathbf{d}}_c \rangle < -D_c/2$) yields $q_c^{\mathrm{contam}} = \frac{1}{2}\left[1 - \operatorname{erf}\left( \frac{D_c}{2\sqrt{2}\sigma_c} \right)\right]$.
\end{proof}

\subsection{Proof of Theorem~\ref{thm:global_rate_bounds} (global rate bounds)}
\label{app:proof_global_rate_bounds}

\begin{proof}
We derive the two bounds separately:
\begin{enumerate}[leftmargin=*]
    \item \textbf{Lower Bound for Capture Rate $r_1$}: Total base errors equal $N\varepsilon = \sum_{c=1}^C N \pi_c \varepsilon_c$. For class $c$, the expected number of uncaptured (lost) error samples is $N \pi_c \varepsilon_c p_c^{\mathrm{loss}}$. Summing across $C$ classes gives total lost errors $N_{\mathrm{loss}} = \sum_{c=1}^C N \pi_c \varepsilon_c p_c^{\mathrm{loss}}$. The fraction of captured errors $r_1 = 1 - (N_{\mathrm{loss}} / N\varepsilon)$ satisfies:
    {\small
    \begin{align*}
    r_1  &= 1 - \frac{1}{N\varepsilon} \sum_{c=1}^C N \pi_c \varepsilon_c p_c^{\mathrm{loss}} \\ &= 1 - \frac{1}{\varepsilon} \sum_{c=1}^C \pi_c \varepsilon_c \cdot \frac{1}{2}\left[1 - \operatorname{erf}\left(\frac{D_c}{2\sqrt{2}\,\sigma_{\mathrm{mix},c}}\right)\right].
    \end{align*}
    }
    
    \item \textbf{Upper Bound for Contamination Rate $r_2$}: Total correctly classified samples equal $N(1-\varepsilon)$. For class $c$, clean samples count $N \pi_c (1-\varepsilon_c) \le N \pi_c$. The expected number of clean samples falsely captured into $\mathcal{C}_{\mathrm{mix}}$ is bounded by $N \pi_c q_c^{\mathrm{contam}}$. Summing across all classes and dividing by $N(1-\varepsilon)$:
    {\small
    \begin{align*}
    &r_2 \le \frac{\sum_{c=1}^C N \pi_c q_c^{\mathrm{contam}}}{N(1-\varepsilon)} = \\ &\frac{1}{1-\varepsilon} \sum_{c=1}^C \pi_c \cdot \frac{1}{2}\left[1 - \operatorname{erf}\left(\frac{D_c}{2\sqrt{2}\,\sigma_c}\right)\right].
    \end{align*}
    }
\end{enumerate}
Combining these bounds with the strict monotonicity of $S_{\mathrm{GCUL}}$ (Theorem~\ref{thm:monotonicity}) completes the proof.
\end{proof}

\section{Proofs for sub-Gaussian capture rate analysis}
\label{app:subgaussian_proofs}

\subsection{Proof of lemma for directional chernoff outlier bound (Eq.~\ref{eq:chernoff_outlier})}
\label{app:proof_chernoff}

\begin{proof}
By definition, a sample $X_c \in \mathcal{E}_c$ is a directional outlier escaping $\mathcal{H}_c^+$ if $Z_c \coloneqq \hat{\mathbf{d}}_c^\top (X_c - \boldsymbol{\mu}_{\mathrm{mix}}) > D_c/2$. Applying Chernoff's bounding method, for any $\lambda > 0$:
\begin{align*}
&\mathbb{P}\left( Z_c > \frac{D_c}{2} \right)\\ &= \mathbb{P}\left( e^{\lambda Z_c} > e^{\frac{\lambda D_c}{2}} \right) \le \exp\left(-\frac{\lambda D_c}{2}\right) \mathbb{E}\left[e^{\lambda Z_c}\right].\\
\end{align*}
By the directional sub-Gaussian assumption with variance proxy $\sigma_{\mathrm{mix},c}^2$, we have $\mathbb{E}[e^{\lambda Z_c}] \le \exp\left( \frac{\lambda^2 \sigma_{\mathrm{mix},c}^2}{2} \right)$ (Proposition 2.5.2, \cite{vershynin2018high}. Thus:
\begin{equation}
\mathbb{P}\left( Z_c > \frac{D_c}{2} \right) \le \exp\left( -\frac{\lambda D_c}{2} + \frac{\lambda^2 \sigma_{\mathrm{mix},c}^2}{2} \right).
\end{equation}
Minimizing this quadratic exponent over $\lambda > 0$ yields the optimal parameter $\lambda^* = \frac{D_c}{2\sigma_{\mathrm{mix},c}^2}$. Substituting $\lambda^*$ into the exponent gives $\eta_{\mathrm{outlier}, c} \le \exp\left( -\frac{D_c^2}{8\sigma_{\mathrm{mix},c}^2} \right)$.
\end{proof}

\subsection{Proof of Theorem~\ref{thm:r1_rigorous} (rigorous lower bound for \texorpdfstring{$r_1$}{r1})}
\label{app:proof_r1_rigorous}

\begin{proof}
The total empirical capture rate $r_1$ is given by:
\begin{align*}
r_1 &= \frac{|\mathcal{C}_{\mathrm{mix}} \cap \mathcal{E}|}{|\mathcal{E}|} = \sum_{c=1}^C \frac{|\mathcal{E}_c|}{|\mathcal{E}|} \cdot \frac{|\mathcal{E}_{c, \mathrm{captured}}|}{|\mathcal{E}_c|} \\ &= \sum_{c=1}^C w_c \left(1 - \frac{|\mathcal{E}_{c, \mathrm{loss}}|}{|\mathcal{E}_c|}\right),
\end{align*}
where $w_c = \frac{\pi_c \varepsilon_c}{\varepsilon}$ and $\sum_{c=1}^C w_c = 1$. Since $\mathcal{E}_{c, \mathrm{loss}} \subseteq \mathcal{E}_{c, \mathrm{boundary}} \cup \mathcal{E}_{c, \mathrm{outlier}}$, applying Boole's inequality (union bound) yields:
\begin{align*}
\frac{|\mathcal{E}_{c, \mathrm{loss}}|}{|\mathcal{E}_c|} &\le \frac{|\mathcal{E}_{c, \mathrm{boundary}}|}{|\mathcal{E}_c|} + \frac{|\mathcal{E}_{c, \mathrm{outlier}}|}{|\mathcal{E}_c|}\\ &\le p_c^{\mathrm{loss}} + \eta_{\mathrm{outlier}, c}.
\end{align*}
Substituting this upper bound into $r_1$:
\begin{align*}
r_1 &\ge \sum_{c=1}^C w_c \left( 1 - p_c^{\mathrm{loss}} - \eta_{\mathrm{outlier}, c} \right)\\ &= 1 - \sum_{c=1}^C w_c \left( \eta_{\mathrm{outlier}, c} + p_c^{\mathrm{loss}} \right).
\end{align*}
Substituting $w_c = \frac{\pi_c \varepsilon_c}{\varepsilon}$, $\eta_{\mathrm{outlier}, c} \le \exp\left( -\frac{D_c^2}{8\sigma_{\mathrm{mix},c}^2} \right)$, and $p_c^{\mathrm{loss}} = \frac{1}{2}\left[1 - \operatorname{erf}\left(\frac{D_c}{2\sqrt{2}\sigma_{\mathrm{mix},c}}\right)\right]$ completes the proof.
\end{proof}

\subsection{Proof of corollary~\ref{cor:attractor_properties} (asymptotic convergence)}
\label{app:proof_asymptotic}

\begin{proof}
As $\max_{c} (\sigma_{\mathrm{mix},c}/D_c) \to 0$, $D_c^2 / (8\sigma_{\mathrm{mix},c}^2) \to \infty$. Consequently, $\eta_{\mathrm{outlier}, c} = \exp\left( -\frac{D_c^2}{8\sigma_{\mathrm{mix},c}^2} \right) \to 0$, implying $\mathcal{S}_{\mathrm{exist}} \to 1$. The Chernoff outlier penalty vanishes, and the lower bound in \eqref{eq:r1_rigorous_bound} asymptotically converges to the projection measure concentration bound:
{\scalefont{0.8}
\begin{equation*}
\lim_{\max_c (\sigma_{\mathrm{mix},c}/D_c) \to 0} r_1 \ge 1 - \frac{1}{\varepsilon} \sum_{c=1}^C \pi_c \varepsilon_c \cdot \frac{1}{2}\left[1 - \operatorname{erf}\left(\frac{D_c}{2\sqrt{2}\sigma_{\mathrm{mix},c}}\right)\right].
\end{equation*}
}
\end{proof}

\section{Detailed dataset statistics and empirical profiling}
\label{app:dataset_stats}

To guarantee that GCUL's theoretical guarantees and selective mechanisms remain effective under real-world data heterogeneity, we conduct comprehensive exploratory analysis on three public benchmarks: \textbf{Dair-AI/Emotion}, \textbf{GoEmotions}, and the \textbf{Kaggle Sentiment} dataset. Figures~\ref{fig:appendix_dair_profile}--\ref{fig:appendix_kaggle_profile} visualize their respective class distribution shifts and text length characterizations.

\begin{figure}[H]
    \centering
    \begin{subfigure}[b]{0.36\textwidth}
        \centering
        \includegraphics[width=\textwidth]{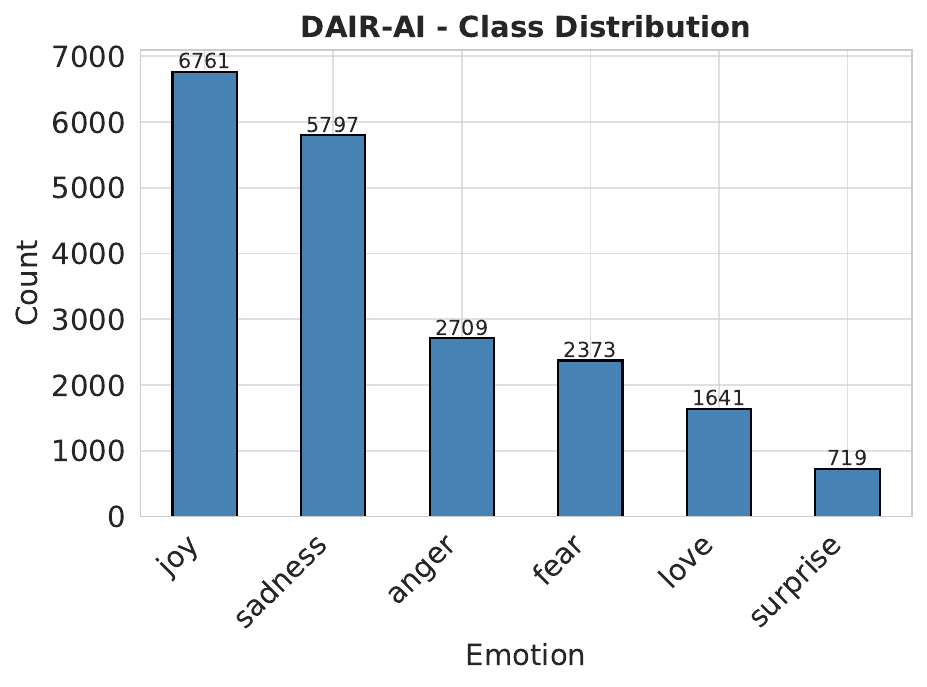}
        \caption{Class count distribution}
        \label{fig:dair_dist}
    \end{subfigure}
    \hfill
    \begin{subfigure}[b]{0.36\textwidth}
        \centering
        \includegraphics[width=\textwidth]{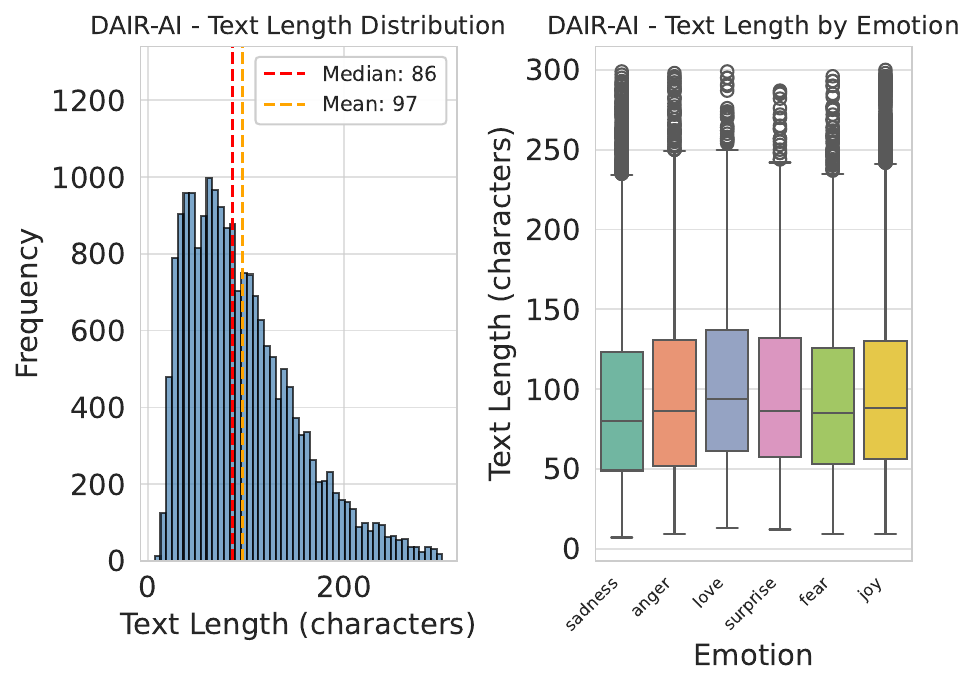}
        \caption{Length distribution \& per-class boxplot}
        \label{fig:dair_len}
    \end{subfigure}
    \caption{\textbf{Dair-AI/emotion dataset profile.} The dataset exhibits a long-tailed class distribution ranging from $6,761$ (\textit{joy}) to $719$ (\textit{surprise}) samples. Sequence lengths concentrate around a median of $86$ characters.}
    \label{fig:appendix_dair_profile}
\end{figure}

\begin{figure}[H]
    \centering
    \begin{subfigure}[b]{0.36\textwidth}
        \centering
        \includegraphics[width=\textwidth]{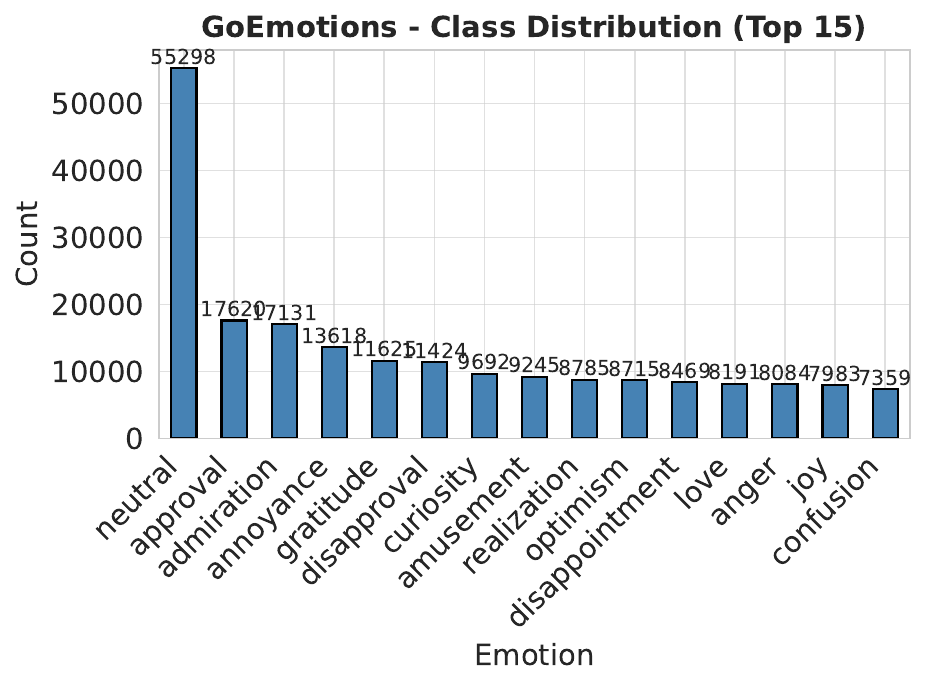}
        \caption{Top-15 class count distribution}
        \label{fig:go_dist}
    \end{subfigure}
    \hfill
    \begin{subfigure}[b]{0.36\textwidth}
        \centering
        \includegraphics[width=\textwidth]{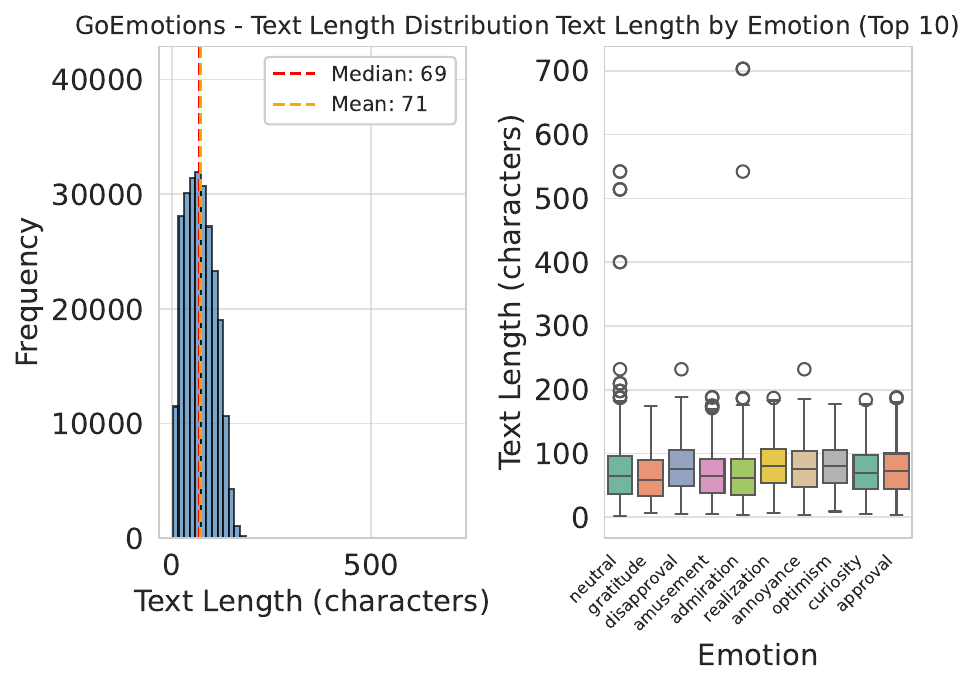}
        \caption{Length distribution \& per-class boxplot}
        \label{fig:go_len}
    \end{subfigure}
    \caption{\textbf{GoEmotions dataset profile.} Highly imbalanced fine-grained taxonomy dominated by $55,298$ \textit{neutral} instances. Texts are concise with a sharp length peak at median $69$ characters.}
    \label{fig:appendix_go_profile}
\end{figure}

\begin{figure}[H]
    \centering
    \begin{subfigure}[b]{0.36\textwidth}
        \centering
        \includegraphics[width=\textwidth]{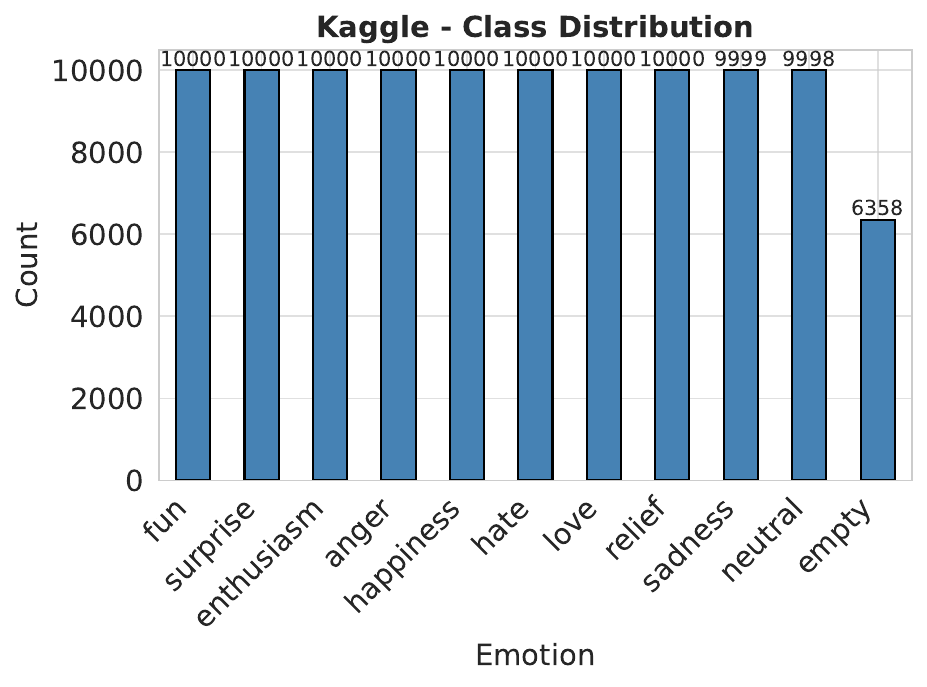}
        \caption{Balanced class distribution}
        \label{fig:kaggle_dist}
    \end{subfigure}
    \hfill
    \begin{subfigure}[b]{0.36\textwidth}
        \centering
        \includegraphics[width=\textwidth]{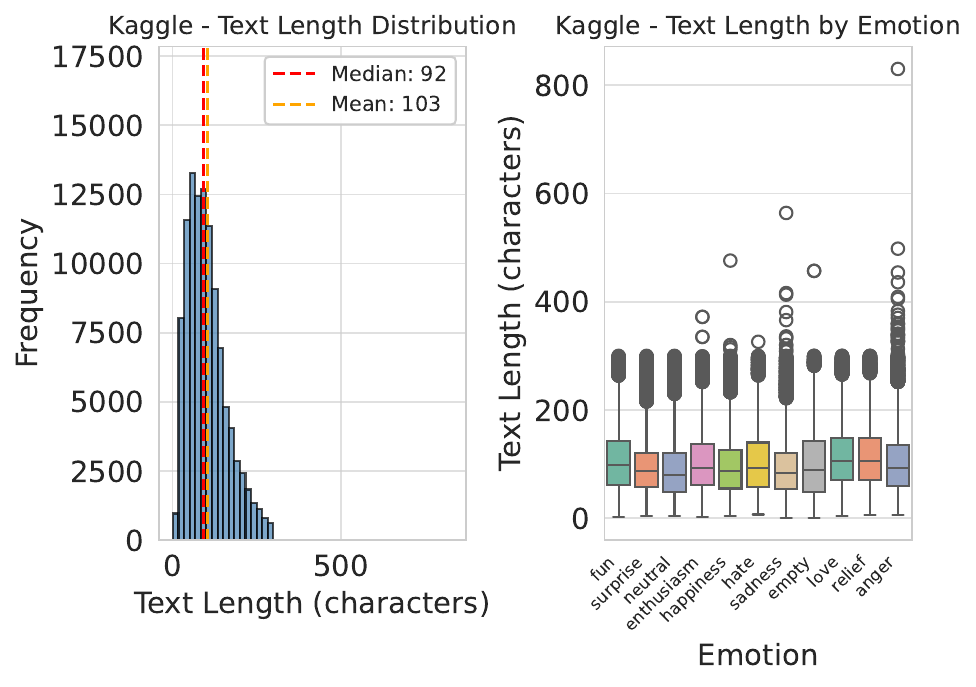}
        \caption{Length distribution \& per-class boxplot}
        \label{fig:kaggle_len}
    \end{subfigure}
    \caption{\textbf{Kaggle emotion dataset profile.} Uniform class balance ($\approx 10,000$ per class) except for the truncated \textit{empty} category ($6,358$). Text lengths feature higher variance (mean $103$ chars) with extended long-tail outliers.}
    \label{fig:appendix_kaggle_profile}
\end{figure}

\paragraph{Key observations on data heterogeneity.}
Across the three evaluated benchmarks, several key structural attributes emerge:
\begin{enumerate}
    \item \textbf{Class Imbalance Spectrum:} The datasets span from strictly uniform distributions (Kaggle: $\approx 10,000$ samples/class) to severe long-tailed distributions (GoEmotions: $55,298$ \textit{neutral} vs. $7,359$ \textit{confusion}). This validates that GCUL's cluster-guided rejection mechanism does not rely on balanced class priors.
    \item \textbf{Sequence Length Invariance:} The median sequence lengths remain relatively short ($69$ to $92$ characters across datasets), but display notable outlier tails up to $800$ characters (e.g., Kaggle \textit{anger}). The per-class boxplots confirm that text length distributions are uniform across semantic emotions, ruling out sequence length as a confounding artifact in cluster formation.
\end{enumerate}

\vspace{-20pt}
\section{Experimental setup and hyperparameter}
\label{app:experimental_details}
\subsection{Benchmark performance on real-world datasets}
\label{app:benchmarks_setup}

\subsubsection{Baseline models implementation}
All models were optimized using early stopping based on validation loss. Feature extraction and hyperparameter choices for baselines are summarized below:

\begin{itemize}
    \item \textbf{TF-IDF Baselines:} Traditional models (Logistic Regression, Linear SVM, Random Forest, Multinomial NB) use TF-IDF features with $N$-gram range $(1, 2)$ and a vocabulary cap of $10,000$. Regularization strength $C=1.0$ for linear models; Random Forest employs $100$ estimators.
    \item \textbf{TextCNN:} Vocab size $25,000$, sequence length $100$, embedded into $300$-d vectors. Filter sizes $\{2, 3, 4\}$ with $100$ feature maps each; dropout rate $0.6$, optimized via Adam ($\text{lr}=5 \times 10^{-4}$).
    \item \textbf{BiLSTM-Attention:} Uses $300$-d pre-trained GloVe embeddings (6B tokens, $84.3\%$ vocabulary coverage, fine-tunable). Architecture includes a $2$-layer BiLSTM ($\text{hidden\_dim}=128$) with self-attention ($\text{dim}=64$), dropout $0.5$, optimized via Adam ($\text{lr}=1 \times 10^{-3}$).
    \item \textbf{DistilBERT:} Fine-tuned using \texttt{distilbert-base-uncased} (66M parameters) with maximum sequence length $128$. Trained for $3$ epochs using AdamW ($\text{lr}=3 \times 10^{-5}$, warmup ratio $0.05$, weight decay $0.01$, batch size $32$).
\end{itemize}

\subsubsection{GCUL framework hyperparameters}

\begin{table}[H]
\centering
\small
\caption{Key hyperparameter configuration for the GCUL framework.}
\label{tab:gcul_hyperparams}
\resizebox{\columnwidth}{!}{
\begin{tabular}{ll}
\toprule
\textbf{Component / Phase} & \textbf{Configuration } \\
\midrule
\textbf{Backbone / Input} & DistilBERT 768-d embeddings \\
\textbf{Sequence Architecture} & 2-layer LSTM \\
\textbf{Optimization} & Adam, Batch size $= 64$ \\
\textbf{Phase Schedule} & Phase 1: 1 epoch; Phase 3: 4 epochs \\
\textbf{Dimensionality Reduction} & 768 $\rightarrow$ 10 dims via PCA / LDA \\
\bottomrule
\end{tabular}
}
\end{table}

\subsection{Comparison with selective classifi-cation baselines}
\begin{table}[H]
\centering
\caption{Experimental parameter configurations for selective classification baselines and GCUL.}
\label{tab:experimental_parameters}
\resizebox{\columnwidth}{!}{
\begin{tabular}{lll}
\toprule
\textbf{Method} & \textbf{Parameter} & \textbf{Configuration} \\
\midrule

\multirow{3}{*}{MSP}
& Rejection cost $\lambda$
& $\{0.1,0.2,\ldots,1.0\}$ \\
& Threshold range
& $\theta \in \{0.00,0.01,\ldots,1.00\}$ \\
& Base classifier
& Logistic Regression, max\_iter $=500$ \\

\midrule

\multirow{7}{*}{SelectiveNet}
& Target coverage
& $\{0.60,0.64,\ldots,0.96\}$ \\
& Utility penalty $\lambda$
& $0.15$ \\
& Training epochs
& $30$ \\
& Learning rate
& $10^{-3}$ \\
& Batch size
& $256$ \\
& Architecture
& LinearSelectiveNet \\
& Loss function
& SelectiveLoss ($\alpha=0.5$, $\lambda=32.0$) \\

\midrule

\multirow{4}{*}{GCUL}
& Rejection cost $\lambda$
& $0.15$ \\
& PCA dimensions
& $\{2,3,4,5,8,10,20,40,80,160,320,768\}$ \\
& Base classifier
& Logistic Regression, max\_iter $=500$ \\
& K-Means
& $n_{\mathrm{init}}=1$, max\_iter $=100$ \\

\midrule

\multirow{2}{*}{Mahalanobis}
& Rejection rate
& $\{0\%,4\%,\ldots,36\%\}$ \\
& Cov regularization
& $\varepsilon=10^{-4}$ \\

\bottomrule
\end{tabular}
} 
\end{table}

\subsection{Synthetic phase-transition verification}

\subsubsection{Dataset generation parameters}

\begin{table}[H]
\centering
\caption{Dataset generation parameters}
\label{tab:synthetic_data_params}
\begin{tabular}{lll}
\toprule
\textbf{Parameter} & \textbf{Value} & \textbf{Description} \\
\midrule
$K$ & 5 & Number of classes \\
$N_{\mathrm{clean}}$ & 2,000 & Samples per clean class \\
$N_{\mathrm{conf}}$ & 2,000 & Samples in confusion region \\
$d$ & 10.0 & Feature dimensionality \\
$S$ & 10.0 & Center scale  \\
$r$ & 1.5 & Cluster radius \\
\bottomrule
\end{tabular}
\end{table}

\subsubsection{GCUL pipeline parameters}

\begin{table}[H]
\centering
\caption{GCUL algorithm parameters}
\label{tab:gcul_params}
\begin{tabular}{lll}
\toprule
\textbf{Parameter} & \textbf{Value} & \textbf{Description} \\
\midrule
$\lambda_{\mathrm{target}}$ & 0.15 & Target threshold \\
\textit{Phase 1} && Logistic Regression \\
\quad solver & lbfgs & Optimization algorithm \\
\quad max\_iter & 500 & Maximum iterations \\
\quad multi\_class & multinomial & Multi-class strategy \\
\textit{Phase 2} && \\
\quad n\_init & 1 & Initialization attempts \\
\quad max\_iter & 100 & Maximum iterations \\
\textit{Phase 3} &&  \\
\quad solver & lbfgs & Optimization algorithm \\
\quad max\_iter & 500 & Maximum iterations \\
\quad multi\_class & multinomial & Multi-class strategy \\
\bottomrule
\end{tabular}
\end{table}

\subsubsection{Noise injection parameters}

\begin{table}[H]
\centering
\caption{Noise sweep parameters}
\label{tab:noise_params}
\begin{tabular}{lll}
\toprule
\textbf{Parameter} & \textbf{Value} & \textbf{Description} \\
\midrule
$\sigma_{\mathrm{noise}}$ & $0, 0.4, 0.8, \dots, 16$ & Feature noise \\
$p_{\mathrm{flip}}$ & $\sigma_{\mathrm{noise}} \times 0.025$ & Label flip probability \\
$\mathcal{S}$ & $\{10, 11, \dots, 19\}$ & Evaluation seeds \\
\bottomrule
\end{tabular}
\end{table}

\subsubsection{Evaluation metrics}

\begin{table}[H]
\centering
\small
\caption{Evaluation metrics}
\label{tab:metrics}
\begin{tabular}{ll}
\toprule
\textbf{Metric} & \textbf{Description} \\
\midrule
Base Acc & Base classifier accuracy on test set \\
Selective Acc & Accuracy after rejection \\
Acc Gain & Selective Acc $-$ Base Acc \\
Uncertain Rate & Proportion of samples rejected \\
PRE $\lambda^*$ & Critical threshold predicted by theory \\
EMP $\lambda^*$ & Critical threshold derived empirically \\
Theory Met & Whether PRE $\lambda^* \geq \lambda_{\mathrm{target}}$ \\
Empirical Effective & Whether EMP $\lambda^* > \lambda_{\mathrm{target}}$ \\
\bottomrule
\end{tabular}
\end{table}

\section{Complete experimental results}
\label{app:extended_results}

\subsection{Comparison with selective classification baselines}
\label{app:comparison_result}

\subsubsection{Kaggle dataset}

\begin{table}[H]
\centering
\small
\caption{MSP results on Kaggle emotion dataset}
\label{tab:kaggle_msp}
\begin{tabular}{cccc}
\toprule
$\lambda$ & Base Acc (\%) & Sel Acc (\%) & Rej Rate (\%) \\
\midrule
0.1 & 88.71 & 99.04 & 16.74  \\
0.2 & 88.71 & 99.04 & 16.74 \\
0.3 & 88.71 & 99.04 & 16.74  \\
0.4 & 88.71 & 99.04 & 16.74 \\
0.5 & 88.71 & 98.12 & 14.56  \\
0.6 & 88.71 & 96.94 & 12.40 \\
0.7 & 88.71 & 91.66 & 4.15 \\
0.8 & 88.71 & 88.72 & 0.01 \\
0.9 & 88.71 & 88.71 & 0.00 \\
1.0 & 88.71 & 88.71 & 0.00 \\
\bottomrule
\end{tabular}
\end{table}

\begin{table}[H]
\centering
\small
\caption{SelectiveNet results on Kaggle emotion dataset}
\label{tab:kaggle_selectivenet}
\begin{tabular}{cccc}
\toprule
Target $c$ & Base Acc (\%) & Sel Acc (\%) & Rej Rate (\%)  \\
\midrule
0.60 & 89.22 & 99.92  & 39.95  \\
0.64 & 89.18 & 99.96 & 35.83  \\
0.68 & 89.33 & 99.92  & 31.95  \\
0.72 & 89.13 & 99.86  & 27.58  \\
0.76 & 89.05 & 99.89  & 23.58  \\
0.80 & 89.17 & 99.68 & 19.83  \\
0.84 & 89.05 & 98.67 & 15.93  \\
0.88 & 89.27 & 96.64 & 11.51  \\
0.92 & 89.24 & 94.30 & 7.88 \\
0.96 & 89.17 & 91.66 & 3.90  \\
\bottomrule
\end{tabular}
\end{table}

\begin{table}[H]
\centering
\small
\caption{Mahalanobis results on Kaggle emotion dataset}
\label{tab:kaggle_mahalanobis}
\begin{tabular}{cccc}
\toprule
Retained & Rej Rate (\%) & Base Acc (\%) & Sel Acc (\%) \\
\midrule
Top 100\% & 0 & 88.71 & 88.71 \\
Top 96\% & 4 & 88.71 & 91.12 \\
Top 92\% & 8 & 88.71 & 93.32  \\
Top 88\% & 12 & 88.71 & 95.65  \\
Top 84\% & 16 & 88.71 & 97.73  \\
Top 80\% & 20 & 88.71 & 99.48  \\
Top 76\% & 24 & 88.71 & 99.93 \\
Top 72\% & 28 & 88.71 & 99.97 \\
Top 68\% & 32 & 88.71 & 99.97 \\
Top 64\% & 36 & 88.71 & 99.97 \\
\bottomrule
\end{tabular}
\end{table}

\begin{table}[H]
\centering
\caption{GCUL results on Kaggle emotion dataset ($\lambda_{\mathrm{target}} = 0.15$)}
\label{tab:kaggle_gcul}
\scalebox{0.76}{
\begin{tabular}{ccccccc}
\toprule
PCA $d$ & Base Acc & Sel Acc & Rej Rate & $\lambda^*_{\mathrm{PRE}}$ & $\lambda^*_{\mathrm{EMP}}$ & Guaranteed \\
\midrule
2 & 88.71 & 92.11 &  5.80 & 0.00 & 0.00 & No \\
3 & 88.71 & 91.95 & 5.64 & 0.03 & 0.05 & No \\
4 & 88.71 & 93.51 &  7.47 & 0.10 & 0.12 & No \\
5 & 88.71 & 93.04 & 6.89 & 0.15 & 0.19 & No \\
8 & 88.71 & 93.93 & 8.07 & 0.32 & 0.38 & Yes \\
10 & 88.71 & 95.47 &  10.22 & 0.35 & 0.46 & Yes \\
20 & 88.71 & 98.19 & 14.44 & 0.29 & 0.48 & Yes \\
40 & 88.71 & 98.17 &  14.49 & 0.29 & 0.48 & Yes \\
80 & 88.71 & 98.19 & 14.50 & 0.30 & 0.49 & Yes \\
160 & 88.71 & 98.18 &  14.48 & 0.31 & 0.50 & Yes \\
320 & 88.71 & 98.18 &  14.48 & 0.32 & 0.52 & Yes \\
768 & 88.71 & 98.18 &  14.48 & 0.34 & 0.59 & Yes \\
\bottomrule
\end{tabular}
}
\end{table}

\subsubsection{GoEmotions dataset}

\begin{table}[H]
\centering
\small
\caption{MSP results on GoEmotions dataset}
\label{tab:goemotions_msp}
\begin{tabular}{cccc}
\toprule
$\lambda$ & Base Acc (\%) & Sel Acc (\%) & Rej Rate (\%) \\
\midrule
0.1 & 37.99 & 100 & 99.83 \\
0.2 & 37.99 & 100 & 99.83 \\
0.3 & 37.99 & 100 & 99.83 \\
0.4 & 37.99 & 100 & 99.83 \\
0.5 & 37.99 & 100 & 99.83 \\
0.6 & 37.99 & 100 & 99.83 \\
0.7 & 37.99 & 37.99 & 0.00 \\
0.8 & 37.99 & 37.99 & 0.00 \\
0.9 & 37.99 & 37.99 & 0.00 \\
1 & 37.99 & 37.99 & 0.00 \\
\bottomrule
\end{tabular}
\end{table}

\begin{table}[H]
\centering
\small
\caption{SelectiveNet results on GoEmotions dataset}
\label{tab:goemotions_selectivenet}
\begin{tabular}{cccc}
\toprule
Target $c$ & Base Acc (\%) & Sel Acc (\%) & Rej Rate (\%) \\
\midrule
0.60 & 37.75 & 45.43 & 39.94 \\
0.64 & 37.47 & 44.30 & 35.94 \\
0.68 & 37.46 & 43.68 & 32.12 \\
0.72 & 37.57 & 42.79 & 28.06 \\
0.76 & 37.94 & 42.45 & 23.82 \\
0.80 & 37.27 & 41.00 & 19.85 \\
0.84 & 37.94 & 41.10 & 16.16 \\
0.88 & 37.86 & 40.07 & 11.85 \\
0.92 & 38.30 & 39.76 & 7.82 \\
0.96 & 38.00 & 38.71 & 3.74 \\
\bottomrule
\end{tabular}
\end{table}

\begin{table}[H]
\centering
\small
\caption{Mahalanobis results on GoEmotions dataset}
\label{tab:goemotions_mahalanobis}
\begin{tabular}{cccc}
\toprule
Retained & Rej Rate (\%) & Base Acc (\%) & Sel Acc (\%) \\
\midrule
Top 100\% & 0 & 37.99 & 37.99 \\
Top 96\% & 4 & 37.99 & 38.14 \\
Top 92\% & 8 & 37.99 & 38.37 \\
Top 88\% & 12 & 37.99 & 38.49 \\
Top 84\% & 16 & 37.99 & 38.69 \\
Top 80\% & 20 & 37.99 & 38.80 \\
Top 76\% & 24 & 37.99 & 38.94 \\
Top 72\% & 28 & 37.99 & 39.05 \\
Top 68\% & 32 & 37.99 & 39.40 \\
Top 64\% & 36 & 37.99 & 39.66 \\
\bottomrule
\end{tabular}
\end{table}

\begin{table}[H]
\centering
\caption{GCUL results on GoEmotions dataset ($\lambda_{\mathrm{target}} = 0.15$)}
\label{tab:goemotions_gcul}
\scalebox{0.76}{
\begin{tabular}{ccccccc}
\toprule
PCA $d$ & Base Acc & Sel Acc & Rej Rate & $\lambda^*_{\mathrm{PRE}}$ & $\lambda^*_{\mathrm{EMP}}$ & Guaranteed \\
\midrule
2 & 37.99 & 38.15 & 3.24 & 0 & 0 & No \\
3 & 37.99 & 38.19 & 2.48 & 0 & 0 & No \\
4 & 37.99 & 38.09 & 3.06 & 0 & 0 & No \\
5 & 37.99 & 38.01 & 2.48 & 0 & 0 & No \\
8 & 37.99 & 38.35 & 4.06 & 0 & 0 & No \\
10 & 37.99 & 38.23 & 2.81 & 0 & 0 & No \\
20 & 37.99 & 38.14 & 1.85 & 0 & 0 & No \\
40 & 37.99 & 37.93 & 2.22 & 0 & 0 & No \\
80 & 37.99 & 38.02 & 2.20 & 0 & 0 & No \\
160 & 37.99 & 37.94 & 2.22 & 0 & 0 & No \\
320 & 37.99 & 37.95 & 2.19 & 0 & 0 & No \\
768 & 37.99 & 38.03 & 2.21 & 0 & 0 & No \\
\bottomrule
\end{tabular}
}
\end{table}

\subsubsection{Dair-AI emotion dataset}

\begin{table}[H]
\centering
\small
\caption{MSP results on Dair-AI emotion dataset}
\label{tab:dair_msp}
\begin{tabular}{cccc}
\toprule
$\lambda$ & Base Acc (\%) & Sel Acc (\%) & Rej Rate (\%) \\
\midrule
0.1 & 92.55 & 98.81 & 11.70 \\
0.2 & 92.55 & 98.81 & 11.70 \\
0.3 & 92.55 & 98.81 & 11.70 \\
0.4 & 92.55 & 97.06 & 8.25 \\
0.5 & 92.55 & 94.09 & 3.55 \\
0.6 & 92.55 & 93.64 & 2.45 \\
0.7 & 92.55 & 92.55 & 0.00 \\
0.8 & 92.55 & 92.55 & 0.00 \\
0.9 & 92.55 & 92.55 & 0.00 \\
1.0 & 92.55 & 92.55 & 0.00 \\
\bottomrule
\end{tabular}
\end{table}

\begin{table}[H]
\centering
\small
\caption{SelectiveNet results on Dair-AI emotion dataset}
\label{tab:dair_selectivenet}
\begin{tabular}{cccc}
\toprule
Target $c$ & Base Acc (\%) & Sel Acc (\%) & Rej Rate (\%) \\
\midrule
0.60 & 93.10 & 98.82 & 40.75 \\
0.64 & 93.00 & 98.43 & 36.50 \\
0.68 & 93.30 & 98.67 & 32.40 \\
0.72 & 93.45 & 98.04 & 28.45 \\
0.76 & 93.10 & 96.74 & 24.75 \\
0.80 & 93.20 & 96.74 & 20.25 \\
0.84 & 93.05 & 96.54 & 16.25 \\
0.88 & 93.15 & 96.57 & 12.60 \\
0.92 & 93.15 & 93.67 & 7.60 \\
0.96 & 93.30 & 93.64 & 4.15 \\
\bottomrule
\end{tabular}
\end{table}

\begin{table}[H]
\centering
\small
\caption{Mahalanobis results on Dair-AI emotion dataset}
\label{tab:dair_mahalanobis}
\begin{tabular}{cccc}
\toprule
Retained & Rej Rate (\%) & Base Acc (\%) & Sel Acc (\%) \\
\midrule
Top 100\% & 0 & 92.55 & 92.55 \\
Top 96\% & 4 & 92.55 & 93.91 \\
Top 92\% & 8 & 92.55 & 93.97 \\
Top 88\% & 12 & 92.55 & 94.20 \\
Top 84\% & 16 & 92.55 & 94.35 \\
Top 80\% & 20 & 92.55 & 94.31 \\
Top 76\% & 24 & 92.55 & 94.28 \\
Top 72\% & 28 & 92.55 & 94.17 \\
Top 68\% & 32 & 92.55 & 93.97 \\
Top 64\% & 36 & 92.55 & 93.75 \\
\bottomrule
\end{tabular}
\end{table}

\begin{table}[H]
\centering
\caption{GCUL results on Dair-AI emotion dataset ($\lambda_{\mathrm{target}} = 0.15$)}
\label{tab:dair_gcul}
\scalebox{0.76}{
\begin{tabular}{ccccccc}
\toprule
PCA $d$ & Base Acc & Sel Acc & Rej Rate & $\lambda^*_{\mathrm{PRE}}$ & $\lambda^*_{\mathrm{EMP}}$ & Guaranteed \\
\midrule
2 & 92.6 & 93.65 & 2.35 & 0.00 & 0.00 & No \\
3 & 92.6 & 95.92 & 5.70 & 0.00 & 0.25 & No \\
4 & 92.6 & 95.79 & 4.95 & 0.22 & 0.27 & Yes \\
5 & 92.6 & 95.73 & 5.10 & 0.33 & 0.31 & Yes \\
8 & 92.6 & 95.79 & 4.95 & 0.10 & 0.24 & No \\
10 & 92.6 & 95.69 & 4.95 & 0.11 & 0.23 & No \\
20 & 92.6 & 95.79 & 4.95 & 0.18 & 0.33 & Yes \\
40 & 92.6 & 95.69 & 4.95 & 0.21 & 0.38 & Yes \\
80 & 92.6 & 95.69 & 4.95 & 0.24 & 0.42 & Yes \\
160 & 92.6 & 95.69 & 4.95 & 0.25 & 0.48 & Yes \\
320 & 92.6 & 95.69 & 4.95 & 0.28 & 0.56 & Yes \\
768 & 92.6 & 95.69 & 4.95 & 0.32 & 0.70 & Yes \\
\bottomrule
\end{tabular}
}
\end{table}

\subsection{synthetic data results (seed 16)}
\label{app:synthetic_result}
\vspace{-10pt}

\begin{table}[H]
\centering
\small
\caption{GCUL performance under varying noise levels on synthetic data (seed 16)}
\label{tab:synthetic_gcul}
\scalebox{0.9}{
\begin{tabular}{ccccccc}
\toprule
Noise  & Base Acc & Sel Acc  & Rej Rate & $\lambda^*_{\mathrm{PRE}}$ & $\lambda^*_{\mathrm{EMP}}$ \\
$\sigma_f$ & (\%) & (\%) & (\%) & & \\
\midrule
0.0   & 86.50 & 100  & 16.54 & 0.96 & 0.82 \\
0.4  & 86.54 & 100  & 16.54 & 0.96 & 0.81 \\
0.8  & 86.88 & 100  & 16.54 & 0.95 & 0.79 \\
1.2   & 86.92 & 100  & 16.54 & 0.94 & 0.79 \\
1.6   & 86.92 & 100  & 16.54 & 0.92 & 0.79 \\
2.0   & 86.92 & 100  & 16.67 & 0.90 & 0.79 \\
2.4   & 87.13 & 99.95  & 16.71 & 0.87 & 0.77 \\
2.8   & 87.08 & 99.25  & 16.21 & 0.82 & 0.75 \\
3.2   & 86.96 & 98.36  & 15.96 & 0.75 & 0.72 \\
3.6   & 86.79 & 97.37  & 16.17 & 0.68 & 0.65 \\
4.0   & 86.13 & 96.38 & 16.08 & 0.60 & 0.64 \\
4.4   & 85.38 & 94.95  & 15.88 & 0.53 & 0.60 \\
4.8  & 84.21 & 92.53  & 15.25 & 0.46 & 0.55 \\
5.2   & 82.96 & 90.08 & 14.33 & 0.39 & 0.50 \\
5.6   & 81.46 & 86.79  & 13.25 & 0.33 & 0.40 \\
6.0   & 80.08 & 83.73  & 13.21 & 0.27 & 0.28 \\
6.4  &  78.71 & 81.36  & 13.29 & 0.22 & 0.21 \\
6.8  &  76.83 & 79.24  & 13.08 & 0.18 & 0.18 \\
7.2  &  75.00 & 77.26  & 12.42 & 0.15 & 0.18 \\
7.6  &  73.46 & 75.42  & 13.38 & 0.11 & 0.15 \\
8.0  &  71.33 & 73.31  & 13.67 & 0.07 & 0.15 \\
8.4  & 69.54 & 71.10  & 13.79 & 0.03 & 0.11 \\
8.8  & 68.00 & 68.85 & 14.00 & 0.00 & 0.06 \\
9.2  & 65.71 & 66.91 & 14.88 & 0.00 & 0.08 \\
9.6  & 63.96 & 65.83 & 13.54 & 0.00 & 0.14 \\
10.0   & 62.38 & 64.46  & 13.96 & 0.00 & 0.15 \\
10.4   & 60.71 & 62.64  & 13.67 & 0.00 & 0.14 \\
10.8   & 59.29 & 60.66  & 14.42 & 0.00 & 0.10 \\
11.2   & 57.92 & 59.88  & 15.46 & 0.00 & 0.13 \\
11.6   & 56.79 & 58.21  & 14.96 & 0.00 & 0.10 \\
12.0  & 55.75 & 57.25  & 15.21 & 0.00 & 0.10 \\
12.4   & 54.67 & 55.21 & 15.17 & 0.00 & 0.04 \\
12.8   & 53.21 & 53.81  & 15.29 & 0.00 & 0.04 \\
13.2   & 52.13 & 52.42  & 15.58 & 0.00 & 0.02 \\
13.6   & 51.21 & 51.88 & 16.00 & 0.00 & 0.04 \\
14.0   & 50.04 & 50.82  & 16.46 & 0.00 & 0.05 \\
14.4   & 49.08 & 49.60  & 16.50 & 0.00 & 0.03 \\
14.8  & 48.25 & 48.27  & 16.71 & 0.00 & 0.00 \\
15.2   & 47.50 & 47.39  & 17.00 & 0.00 & -0.01 \\
15.6   & 46.54 & 46.80  & 17.21 & 0.00 & 0.02 \\
16.0   & 45.71 & 45.61 & 16.96 & 0.00 & -0.01 \\
\bottomrule
\end{tabular}
}
\end{table}

\section{Code availability}
The complete source code is available at: \url{https://github.com/u678868/GCUL}

\end{document}